\documentclass{article}

\PassOptionsToPackage{numbers, compress}{natbib}
\usepackage[preprint]{neurips_2026}

\usepackage[utf8]{inputenc} 
\usepackage[T1]{fontenc}    
\usepackage{hyperref}       
\usepackage{url}            
\usepackage{booktabs}       
\usepackage{amsfonts}       
\usepackage{nicefrac}       
\usepackage{microtype}      
\usepackage{xcolor}         

\usepackage{bm}
\usepackage{amsmath}
\usepackage{graphicx}
\usepackage{natbib}
\usepackage{amsthm}
\theoremstyle{plain}
\newtheorem{theorem}{Theorem}[section]
\newtheorem{proposition}[theorem]{Proposition}
\newtheorem{lemma}[theorem]{Lemma}

\theoremstyle{definition}

\theoremstyle{remark}

\newcommand{\vw}{\bm{w}}
\newcommand{\vx}{\bm{x}}
\newcommand{\vy}{\bm{y}}
\newcommand{\vz}{\bm{z}}

\newcommand{\vphi}{\bm{\phi}}

\newcommand{\off}{\text{off}}

\newcommand{\llv}{\left\lVert}
\newcommand{\rrv}{\right\rVert}

\newcommand{\gauss}{\mathcal{N}}
\newcommand{\Real}{\mathbb{R}}

\title{Characterizing and Correcting Effective Target Shift in Online Learning}

\author{%
  Ziyan Li\\
  Department of Physics\\ 
  Washington University in St. Louis\\
  St. Louis, USA\\
  \texttt{li.ziyan@wustl.edu} \\
  \And
  Naoki Hiratani \\
  Department of Neuroscience\\
  Washington University in St. Louis \\
  St. Louis, USA \\
  \texttt{hiratani@wustl.edu} \\
}

\begin{document}

\maketitle

\begin{abstract}
Online learning from a stream of data is a defining feature of intelligence, yet modern machine learning systems often struggle in this setting, especially under distributional shift. To understand its basic properties, we study the relationship between online and offline learning in the context of kernel regression. We derive a closed-form expression for the function learned by online kernel regression, revealing that online kernel regression is equivalent to offline regression with shifted, inaccurate target outputs. 
Conversely, we show that by compensating for this effective shift in the teaching signal through target correction, online kernel-based learning can provably learn the same predictor as its offline counterpart. We derive both a closed-form expression for this target correction and an iterative form that can be applied sequentially. 
Applying this framework to image classification tasks on CIFAR-10 and CORe50, we show that online stochastic gradient descent with iteratively corrected targets outperforms learning with the true targets in continual learning settings. This work therefore provides a basic framework for analyzing and improving online learning in non-stationary environments.
\end{abstract}

\section{Introduction}
Humans and animals can adaptively learn new skills and knowledge from streams of sensory inputs and environmental feedback. This online form of learning in non-stationary environments is becoming increasingly important in machine learning applications, including large language model fine-tuning \citep{luo2025empirical, carta2023grounding}, robotics \citep{lesort2020continual, wolczyk2021continual}, and medical diagnosis \citep{lee2020clinical}. However, neural network-based models typically require many epochs of updates over the same static dataset, using gradients averaged over many samples. Consequently, in the presence of large distributional shifts or continual changes in the task objective, neural networks often struggle to learn in an online manner and suffer from catastrophic forgetting \citep{mccloskey1989catastrophic, van2019three, hadsell2020embracing, luo2025empirical}.

There has been substantial empirical progress in developing methods to mitigate forgetting in continual learning \citep{french1991using, kirkpatrick2017overcoming, zenke2017continual, rebuffi2017icarl, buzzega2020dark, mirzadeh2022wide}, and there has also been important theoretical progress \citep{bennani2020generalisation, lee2021continual, doan2021theoretical, evron2022catastrophic, lin2023theory}. Nevertheless, the theoretical basis of forgetting remains incompletely understood, especially in the stricter setting of online continual learning, where data must be processed in a single pass \citep{lopez2017gradient, aljundi2019gradient, mai2022online}. Traditional online learning theory \citep{zinkevich2003online, rakhlin2011making, dieuleveut2016nonparametric} provides rigorous regret bounds for streaming data, but these analyses often assume convex objectives or stationary distributions and therefore do not directly capture the dynamics of modern deep learning under distribution shift. Thus, we still lack a mechanistic understanding of how the sequential nature of online updates alters the learned predictor relative to a joint, multi-task offline ideal.

Here, we address why online continual learning is structurally difficult, and how to mitigate this difficulty, by asking: \textit{what kind of offline learning is an online learner actually performing?} We address this question primarily in the context of kernel regression \citep{scholkopf2002learning,hofmann2008kernel} and its online formulation \citep{freund1998large, cauwenberghs2000incremental, kivinen2004online, engel2004kernel,smale2006online}. This provides a theoretically grounded framework that is also intrinsically linked to the training dynamics of wide deep neural networks via the Neural Tangent Kernel (NTK) \citep{jacot2018neural, lee2019wide, arora2019exact}.

First, we rewrite online kernel regression as a closed-form offline expression. This highlights the directional, causal sample-to-sample interactions inherent in online learning, contrasting with the bi-directional inter-sample interactions of offline batch learning. Importantly, this formulation allows us to compare online and offline learning directly. We reveal that online kernel regression is mathematically equivalent to offline kernel regression trained on shifted targets, and we provide a closed-form expression for these effective shifts.

Second, we reveal that it is possible to reverse this effective label shift by actively correcting the teaching signals used during online learning. Specifically, we prove that there exists a label correction that allows online kernel regression to exactly achieve the performance of its offline counterpart. While this exact correction requires a priori knowledge of all future samples, we derive an iterative approximation that respects causality. 

Finally, we apply this target correction framework heuristically to non-linear models outside the strict kernel regime. By using corrected targets based on the evolving empirical NTK as training signals for mini-batch stochastic gradient descent (SGD), we evaluate deep neural networks on the CIFAR-10 \citep{Krizhevsky09learningmultiple} and CORe50 \citep{lomonaco2017core50} datasets. We show that in continual learning settings, target correction with periodic NTK updates significantly outperforms vanilla SGD. This result provides empirical support for a counterintuitive conclusion from our theory: online continual learning can be improved by training on corrected target labels rather than the ground-truth targets.

\section{Related Work}
The online implementation of kernel regression has been studied extensively \citep{kivinen2004online, freund1998large, cauwenberghs2000incremental, engel2004kernel, smale2006online}. Various data sub-sampling and dictionary-building methods to keep these algorithms memory-efficient have also been proposed \citep{dekel2008forgetron, richard2008online, liu2009information, dai2014scalable}. Theoretical analyses of convergence rates have demonstrated that, in the absence of temporal distribution shifts, online kernel regression is competitive with its offline counterpart under appropriate step size choices \citep{ying2008online, tarres2014online, dieuleveut2016nonparametric, wen2025optimal, zhang2025learning}. However, these works primarily focus on independent and identically distributed (i.i.d.) settings. In contrast, our work explicitly addresses non-stationary environments, including class-incremental learning. While online learning in non-stationary environments has been explored, particularly in online convex optimization settings \citep{zinkevich2003online, besbes2015non}, our approach provides a fine-grained sample level analysis of the sub-optimality.

The concept of an effective label shift is conceptually adjacent to influence functions, a diagnostic tool used to estimate the impact of individual training points on model predictions \citep{hampel1974influence, koh2017understanding}. The key distinction lies in the direction of the analysis: while influence functions measure how the model's error changes when a data point is perturbed, our effective target shift measures how the data itself is effectively perturbed by the online learning process.

Furthermore, our target correction method finds parallels in label smoothing \citep{szegedy2016rethinking, muller2019does} and knowledge distillation \citep{hinton2015distilling}, where training targets are softened to improve generalization. Unlike global target modifications employed in these methods, here we provide sample-level vectorized target correction to exactly reverse the bias stemming from online learning.  
This method is also conceptually similar to  reward shaping \citep{dorigo1994robot, ng1999policy, zheng2018learning} in the reinforcement learning (RL) literature, which accelerates learning by augmenting the environmental reward.

In the context of continual learning, our target correction is related to methods for mitigating catastrophic forgetting by regularizing the weights to protect prior knowledge \citep{kirkpatrick2017overcoming, zenke2017continual} or by sparsifying/orthogonalizing gradients to prevent interference \citep{french1991using, serra2018overcoming, bennani2020generalisation}. The proposed target correction framework offers a conceptually different perspective: rather than constraining the weights or gradients directly, we modulate the target outputs to naturally align the online trajectory with the multi-task (offline) ideal. 
This is also related to Dark Experience Replay \citep{buzzega2020dark}, in which past predictions, rather than ground-truth targets, are replayed. Our framework provides a mathematical basis for understanding how non-ground-truth targets benefit online learning and how to construct surrogate targets. 


\section{Effective Target Shift in Online Kernel Regression}
To understand the nature and limitations of online learning, we investigate the following question: What is the offline learning equivalent of an online learning process?
We address this in the context of kernel regression, where the offline predictor admits a closed-form solution. By utilizing the properties of inverse triangular matrices, we reformulate the online learning rule as a modified offline learning.
This reformulation reveals three key insights:
(1) Online learning incurs a performance penalty because it relies on a directional kernel as opposed to the symmetric kernel employed in offline regression.
(2) Online kernel regression is equivalent to offline learning with a target shift, suggesting that online updates effectively transform the teaching signals.
(3) The evolution of this effective target shift follows a simple error-based update rule: the larger the error on a new sample, the greater the label shift it induces on neighboring samples in the feature space.

\subsection{Offline Expression of Online Kernel Regression}
Consider online learning from a sequence of inputs and target outputs $\{\vx_t, \vy_t\}_{t=1}^n$, where $\vx_t \in \Real^{d_x}$ and $\vy_t \in \Real^{d_y}$.
For brevity, we define the $d_x \times n$ input matrix $X_n \equiv [\vx_1, \dots, \vx_n]$ and the $d_y \times n$ target matrix $Y_n \equiv [\vy_1, \dots, \vy_n]$.
We first focus on linear regression using a feature mapping $\phi(\vx)$, where $\vy = W \phi(\vx)$. More general scenarios are discussed in Section \ref{sec_cifar10_app}.
Given a mean-squared error (MSE) loss with $L_2$ regularization:
\begin{align}
\ell (W;\vx_t, \vy_t) = \tfrac{1}{2} \lVert \vy_t - W\phi(\vx_t) \rVert^2 + \tfrac{\gamma}{2} \lVert W \rVert_F^2,
\end{align}
it is well known \citep{hofmann2008kernel} that offline batch learning yields the following predictor for an arbitrary input $\vx_*$:
\begin{align} \label{eq_batch_kernel_regression}
f_{\text{off}} (\vx_*; X_n, Y_n) = Y_n \left( \gamma I + K(X_n, X_n) \right)^{-1} k(X_n, \vx_*),
\end{align}
where $k(\vx, \vx') \equiv \phi(\vx)^T \phi(\vx')$ is the kernel function, $K(X_n, X_n)$ is the $n \times n$ Gram matrix with elements $k(\vx_i, \vx_j)$, $k(X_n, \vx_*)$ is the $n$-dimensional kernel vector, and $I$ is the identity matrix.
In contrast, an iterative sample-by-sample gradient update in the weight space follows:
\begin{align} \label{eq_online_kr_sgd}
W_{t+1} = W_t - \eta \left( (W_t \phi(\vx_t) - \vy_t) \phi(\vx_t)^T + \gamma W_t \right),
\end{align}
where $\eta$ is a fixed learning rate. Applying this update rule recursively, we obtain:
\begin{align} \label{eq_Wt_online_kr_sgd}
W_{t+1} 
&= \sum_{i=1}^t \eta \vy_i \vphi_i^T \prod_{j=i+1}^t \left[ (1-\eta \gamma) I - \eta \vphi_j \vphi_j^T \right]
+ W_1 \prod_{j=1}^t \left[ (1-\eta\gamma)I - \eta \vphi_j \vphi_j^T \right],
\end{align}
where $\vphi_i \equiv \phi(\vx_i)$ and $W_1$ is the initial weight matrix, and we used the convention $\prod_{j=a}^b A_j = A_a A_{a+1}...A_b$. Using the properties of the inverse of an upper triangular matrix, we derive a closed-form solution for the predictor resulting from this online rule (see Appendix \ref{subsec_proof_lemma_onlie_kr_closed} for the derivation).

\begin{lemma} \label{lemma_online_kr_closed}
Given a sequence of data $(X_n, Y_n)$, the online learning predictor $f_{\text{on}} (\vx_*; X_n, Y_n) \equiv W_{n+1} \phi(\vx_*)$ after $n$ update steps from $W_1=0$ follows:
\begin{align} \label{eq_eq_okr_closed_with_reg}
f_{\text{on}} (\vx_*; X_n, Y_n)
= Y_n D_n \left( \tfrac{1}{\eta} I + \tfrac{1}{1 - \eta\gamma} K^U(X_n, X_n)\right)^{-1} k(X_n, \vx_*),
\end{align}
where $K^U (X_n, X_n)$ is the strictly upper-triangular portion of the Gram matrix $K(X_n, X_n)$, and $D_n$ is a diagonal matrix with $[D_n]_{ii} = (1-\eta\gamma)^{n-i}$.
\end{lemma}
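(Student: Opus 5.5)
The plan is to avoid expanding the product in \eqref{eq_Wt_online_kr_sgd} directly. Instead I will track the online weights in the span of the features seen so far, derive a triangular recursion for the coefficients, and then invert that recursion using a diagonal similarity transform that moves the decay factors $(1-\eta\gamma)$ into $D_n$.

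\textbf{Step 1 (representer form).} Since $W_1=0$, induction on \eqref{eq_online_kr_sgd} shows $W_{t+1}=\sum_{i\le t}\bm{c}_i^{(t)}\vphi_i^T$ for some coefficient vectors $\bm{c}_i^{(t)}\in\Real^{d_y}$. The update can be rewritten as
$$W_{t+1}=(1-\eta\gamma)W_t+\eta(\vy_t-W_t\vphi_t)\vphi_t^T .$$
So at each step the old coefficients are multiplied by $(1-\eta\gamma)$, and a single new coefficient is added:
$$\bm{e}_t\equiv\eta\Bigl(\vy_t-\textstyle\sum_{i<t}(1-\eta\gamma)^{t-1-i}k(\vx_i,\vx_t)\,\bm{e}_i\Bigr).$$
After $n$ steps the coefficient of $\vphi_t^T$ is $(1-\eta\gamma)^{n-t}\bm{e}_t$. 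Writing $E=[\bm{e}_1,\dots,\bm{e}_n]$, this gives $f_{\text{on}}(\vx_*)=E\,D_n\,k(X_n,\vx_*)$.

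\textbf{Step 2 (triangular system).} Rearranging the definition of $\bm{e}_t$ gives
$$\vy_t=\tfrac1\eta\bm{e}_t+\textstyle\sum_{i<t}(1-\eta\gamma)^{t-1-i}k(\vx_i,\vx_t)\,\bm{e}_i .$$
In matrix form this reads $Y_n=EM$, where $M$ is upper triangular with $M_{tt}=1/\eta$ and $M_{it}=(1-\eta\gamma)^{t-1-i}k(\vx_i,\vx_t)$ for $i<t$. Its diagonal is nonzero, so $M$ is invertible and $E=Y_nM^{-1}$.

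\textbf{Step 3 (removing the decay factors).} Let $A=\tfrac1\eta I+\tfrac1{1-\eta\gamma}K^U(X_n,X_n)$ and $S=\mathrm{diag}\bigl((1-\eta\gamma)^i\bigr)$. A direct entrywise check should give $M=S^{-1}AS$, because
$$(S^{-1}AS)_{it}=(1-\eta\gamma)^{t-i}A_{it},$$
which equals $(1-\eta\gamma)^{t-i-1}k_{it}$ off the diagonal and $1/\eta$ on it. Hence $E=Y_nS^{-1}A^{-1}S$. It then remains to compute
$$f_{\text{on}}=Y_nS^{-1}A^{-1}SD_n\,k.$$
Since $SD_n=(1-\eta\gamma)^nI$ is a scalar matrix, it commutes past $A^{-1}$. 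Using $(1-\eta\gamma)^nS^{-1}=D_n$, we obtain $f_{\text{on}}=Y_nD_nA^{-1}k(X_n,\vx_*)$, which is exactly \eqref{eq_eq_okr_closed_with_reg}.

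The main obstacle is Step 3: guessing the right diagonal conjugation so that the geometric decay factors in $M$ collapse into the single $D_n$ placed on the left of the inverse. I would find it by matching exponents entrywise, as done above. The remaining steps are bookkeeping. One should also note that if $\eta\gamma=1$ the statement degenerates, so $1-\eta\gamma\neq0$ is assumed.
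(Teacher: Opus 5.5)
Your proof is correct, and it takes a genuinely different route from the paper's.

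The paper works directly with the unrolled product in Eq.~\ref{eq_Wt_online_kr_sgd}. It expands $\vphi_i^T\prod_{j=i+1}^n[(1-\eta\gamma)I-\eta\vphi_j\vphi_j^T]\vphi_*$ into a sum over increasing index chains $i<j_1<\dots<j_\ell$ of kernel products. It identifies these as entries of $(K^U)^\ell k(X_n,\vx_*)$, and then sums the terminating Neumann series $\sum_\ell(-\tfrac{\eta}{1-\eta\gamma}K^U)^\ell$ using nilpotency of $K^U$. The factor $(1-\eta\gamma)^{n-i}$ is pulled out coefficient by coefficient to form $D_n$.

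You instead pass to dual (representer) coefficients. You observe that the online rule is exactly forward substitution on the upper-triangular system $Y_n=EM$. You then absorb the geometric decay into the diagonal similarity $M=S^{-1}AS$, and $SD_n=(1-\eta\gamma)^nI$ collapses everything to $Y_nD_nA^{-1}k(X_n,\vx_*)$. The entrywise identities all check out.

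What your route buys:
\begin{itemize}
\item It avoids the combinatorial chain expansion and the nilpotency/Neumann-series step. Invertibility of a triangular matrix with diagonal $1/\eta$ is all you need.
\item It makes the analogy with the offline solution structural. Offline dual coefficients solve the symmetric system $Y_n=\alpha(\gamma I+K)$, while online ones solve a triangular, causally solvable system.
\item Your $\bm{e}_t$ are just $-\eta$ times the paper's prediction errors (at $\gamma=0$), which links directly to the one-step target-shift dynamics.
\item It carries over verbatim to the mini-batch case, where $M$ becomes block-triangular with diagonal blocks $\tfrac{1}{\eta}I$.
\end{itemize}

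What the paper's route buys: it exhibits each coefficient explicitly as a sum over causal chains of kernel interactions, which is what underlies its ``directional kernel'' interpretation.

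Both arguments require $1-\eta\gamma\neq 0$, which you correctly flag.
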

While the iterative formulation of online kernel regression is well established \citep{cauwenberghs2000incremental, kivinen2004online, engel2004kernel, smale2006online, ying2008online}, our contribution is the derivation of an offline-equivalent closed-form expression for its learned predictor.

Equation \ref{eq_eq_okr_closed_with_reg} implies that regularization in the online setting is primarily driven by the learning rate factor $1/\eta$, as previously observed \citep{ying2008online, barrett2020implicit}. 
Therefore, for simplicity, we set $\gamma = 0$ for online learning in the following analysis. The expression then simplifies to:
\begin{align} \label{eq_okr_closed_zero_reg}
f_{\text{on}} (\vx_*) = Y_n \left( \tfrac{1}{\eta} I + K^U(X_n, X_n) \right)^{-1} k(X_n, \vx_*).
\end{align}
Comparing Eq. \ref{eq_okr_closed_zero_reg} with Eq. \ref{eq_batch_kernel_regression} highlights the fundamental distinction between online and offline regimes. In batch regression, the inverse term depends on the full similarity matrix $K(X_n, X_n)$. Conversely, the online predictor uses only the strictly upper-triangular components $K^U(X_n, X_n)$. This is intuitively interpretable: since an online model lacks access to future samples, it can only incorporate similarities between the current sample and its predecessors, resulting in a directional kernel. This lack of bidirectional information flow provides a clear mathematical intuition for why online learning is inherently more constrained.
Under a large learning rate $\eta$, the matrix $\tfrac{1}{\eta} I + K^U$ becomes highly non-normal, potentially leading to the unstable learning dynamics often observed in online settings (see Fig. \ref{fig:toy_online}C in Appendix). Conversely, a small $\eta$ acts as a strong regularizer, stabilizing the learning dynamics.

Finally, as shown in Appendix \ref{subsec_minibatch}, this framework extends to mini-batch learning with size $b$:
\begin{align} \label{eq_minibatch_main}
f_b (\vx_*) = Y_n \left( \tfrac{1}{\eta} I + K^{bU} (X_n, X_n) \right)^{-1} k(X_n, \vx_*),
\end{align}
where $K^{bU}$ is a strictly upper-triangular matrix where the $b \times b$ block-diagonal components are also zeroed out. This reflects the fact that the model cannot leverage correlations between samples within the same mini-batch during a single update step.

\subsection{Effective Target Shift}
The structural similarity between the online kernel regression expression (Eq. \ref{eq_okr_closed_zero_reg}) and its offline counterpart (Eq. \ref{eq_batch_kernel_regression}) allows us to introduce the concept of an effective target shift. We define the effective targets $Y^e_n$ as follows:
\begin{align} \label{eq_def_target_shift_kr}
  Y^e_n 
  \equiv Y_n \left( \tfrac{1}{\eta} I + K^U(X_n, X_n)\right)^{-1} 
  \left( \gamma I + K(X_n, X_n) \right).
\end{align}
Using this definition, we establish the following equivalence:
\begin{theorem} \label{target_shift_theorem}
Assuming $\eta, \gamma > 0$, the predictor $f_{\text{on}}$ obtained through online gradient descent $W_{t+1} = W_t - \eta (W_t \phi(\vx_t) - \vy_t) \phi(\vx_t)^T$ with $W_1=0$ on data $(X_n, Y_n)$ is identical to the offline kernel regression predictor $f_{\text{off}}$ trained on the data with modified target $(X_n, Y^e_n)$. That is:
\begin{align}
  f_{\text{on}}(\vx; X_n, Y_n) = f_{\text{off}} (\vx; X_n, Y^e_n).
\end{align}
\end{theorem}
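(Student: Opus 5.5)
The plan is to substitute the definition of $Y^e_n$ into the offline formula and let the factor $(\gamma I + K(X_n,X_n))$ cancel. Note that the online update in the statement has no weight decay, so it is the $\gamma=0$ case of Lemma \ref{lemma_online_kr_closed}. In that case $D_n = I$ and $1/(1-\eta\gamma)=1$. Hence the online predictor is exactly Eq. \ref{eq_okr_closed_zero_reg}:
\begin{align*}
f_{\text{on}}(\vx; X_n, Y_n) = Y_n \left(\tfrac{1}{\eta} I + K^U(X_n,X_n)\right)^{-1} k(X_n,\vx).
\end{align*}
The $\gamma$ appearing in the theorem is therefore only the ridge parameter of the offline comparator and of the definition of $Y^e_n$; it plays no role in the online dynamics.

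Two matrix inverses must be checked to exist. First, $\tfrac{1}{\eta}I + K^U$ is upper triangular with diagonal entries $1/\eta > 0$, so it is invertible; this needs only $\eta>0$. Second, $K(X_n,X_n) = \Phi^T\Phi$ with $\Phi = [\vphi_1,\dots,\vphi_n]$ is a Gram matrix and hence positive semidefinite. Therefore $\gamma I + K$ is positive definite, and invertible, because $\gamma>0$. So both $Y^e_n$ and $f_{\text{off}}(\cdot\,; X_n, Y^e_n)$ are well defined.

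With both inverses in hand, substitute Eq. \ref{eq_def_target_shift_kr} into Eq. \ref{eq_batch_kernel_regression}:
\begin{align*}
f_{\text{off}}(\vx; X_n, Y^e_n) = Y_n \left(\tfrac{1}{\eta} I + K^U\right)^{-1}\left(\gamma I + K\right)\left(\gamma I + K\right)^{-1} k(X_n,\vx) = f_{\text{on}}(\vx; X_n, Y_n).
\end{align*}
This is the claimed identity.

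There is no real obstacle in this argument: it is a direct substitution followed by cancellation, with all the substance carried by Lemma \ref{lemma_online_kr_closed}. The only point needing care is bookkeeping. One must confirm that Lemma \ref{lemma_online_kr_closed} is applied with regularization $0$, matching the update rule in the statement, rather than with the same $\gamma$ used in the offline problem. One must also confirm that the hypothesis $\gamma>0$ is exactly what guarantees invertibility of $\gamma I + K$, even when the Gram matrix is singular, for example when $n$ exceeds the feature dimension.
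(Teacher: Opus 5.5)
Your proposal is correct and is essentially the paper's proof. The paper takes the $\gamma=0$ closed form of Lemma~\ref{lemma_online_kr_closed} and inserts $(\gamma I + K)^{-1}$ to rewrite $f_{\text{on}}$ as $f_{\text{off}}(\cdot\,;X_n,Y^e_n)$; your substitution-and-cancellation is the same one-line identity read in the other direction. Your explicit checks that $\tfrac{1}{\eta}I + K^U$ (triangular with diagonal $1/\eta$) and $\gamma I + K$ (positive definite for $\gamma>0$) are invertible are a worthwhile addition that the paper leaves implicit.
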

\begin{proof}
By substituting the definition of $Y^e_n$ from Eq. \ref{eq_def_target_shift_kr} into the online predictor formula (Eq. \ref{eq_okr_closed_zero_reg}), we have:
\begin{align}
f_{on} (\vx_*; X_n, Y_n) 
    &= Y_n \left( \tfrac{1}{\eta} I + K^U(X_n, X_n)\right)^{-1} k(X_n, \vx_*)
    \nonumber \\
    &= Y^e_n \left( \gamma I + K(X_n, X_n)\right)^{-1} k(X_n, \vx_*) 
    \nonumber \\
    &= f_{\off} (\vx_*; X_n, Y^e_n). 
\end{align}
\end{proof}
This theorem demonstrates that the sub-optimality of online kernel learning can be fully captured by a transformation of the target outputs from $Y_n$ to $Y_n^e$.
Note that the online predictor $f_{\text{on}}$ in Theorem \ref{target_shift_theorem} is defined without an explicit regularizer (Eq. \ref{eq_okr_closed_zero_reg}). The parameter $\gamma$ in Eq. \ref{eq_def_target_shift_kr} originates solely from the offline predictor $f_{\text{off}}$. Consequently, the effective targets $Y^e_n$ depend directly on the regularization strength $\gamma$ of the specific offline baseline that the online model effectively mimics.

Figure \ref{fig:kernel_reg}A illustrates this phenomenon in a one-dimensional regression task using data generated from a Gaussian Process (GP; see Appendix \ref{subsec_toy_model_det}). While the offline predictor (blue line) successfully recovers the true underlying function (gray line) from noisy samples (blue points), the online predictor (orange line) exhibits significant deviations. The effective target shift provides a lens for interpreting these errors: the orange crosses represent the effective targets $y^e_i$. In regions where $f_{\text{on}}(x)$ diverges from the batch solution, the effective targets have been shifted away from the true labels. Because the online predictor is mathematically equivalent to a batch regressor trained on these "distorted" targets, the resulting function inherits these deviations.

\begin{figure}[t!]
  \begin{center}
    \centerline{\includegraphics[width=1.0\columnwidth]{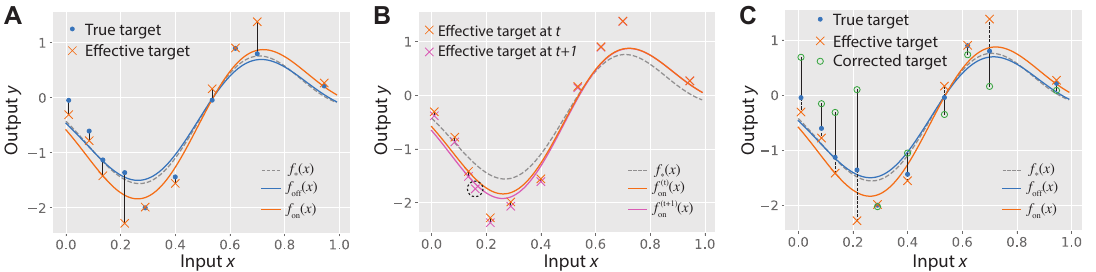}}
    \caption{ Illustration of effective target shift and its correction. 
    \textbf{(A)} Example of effective target shift. Blue and orange points correspond to the true target $y_i$ and effective target $y_i^e$, respectively.
    \textbf{(B)} Shift in the effective target from one-step update (with the sample marked by a dashed black circle). Pink points on the left side shift downward from orange points, while they overlap on the right side.  
    \textbf{(C)} Target correction (green circles) for the effective target shift (orange x-marks) illustrated in panel A. 
    See main text and Appendix\ref{subsec_toy_model_det} for details. 
    }
    \label{fig:kernel_reg}
  \end{center}
\end{figure}

\subsection{One-step Update of Effective Target Shifts}
To understand the mechanism by which these distortions emerge, we analyze the dynamics of the target shift. Let $\vy^e_i (n)$ denote the effective target of the $i$-th sample after $n$ updates. Upon receiving the $(n+1)$-th sample, the effective targets evolve as follows (see Appendix \ref{subsec_target_shift_dynamics_kr} for the derivation):
\begin{align} \label{eq_effective_target_change}
  \vy^e_i (n+1)
  &= \begin{cases}
  \vy^e_i (n) - \eta \bm{e}_{n+1} k(\vx_{n+1}, \vx_i)
  & i \leq n \\
  \vy_{n+1} - (\eta [\gamma + k_{n+1}]-1) \bm{e}_{n+1} 
  & i = n+1,
\end{cases} \nonumber \\
  \bm{e}_{n+1} 
  &\equiv f_{\text{on}} (\vx_{n+1}; X_n, Y_n) - \vy_{n+1},
\end{align}
where $k_{n+1} = k(\vx_{n+1}, \vx_{n+1})$ and $\bm{e}_{n+1}$ is the prediction error on $\vx_{n+1}$ from data $(X_n,Y_n)$. 
For previous samples ($i \leq n$), the shift is proportional to the prediction error $\bm{e}_{n+1}$ and the kernel similarity $k(\vx_{n+1}, \vx_i)$. This indicates that a poor prediction on a new point biases the effective targets of its neighbors in the feature space.
The update for the new sample ($i = n+1$) reveals an interesting limit. As $\eta \to 0$, the effective target $\vy^e_{n+1}(n+1)$ converges to $f_{\text{on}}(\vx_{n+1}; X_n, Y_n)$. Intuitively, if the learning rate is negligible, the model does not "accept" the new label $\vy_{n+1}$; instead, its effective target is simply its own current prediction, resulting in no change to the model parameters. Conversely, the $(n+1)$-th sample experiences no target shift ($\vy^e_{n+1} = \vy_{n+1}$) when the learning rate is tuned to $\eta = 1/(\gamma+k_{n+1})$, effectively matching the online and offline diagonal regularization.

Figure \ref{fig:kernel_reg}B visualizes this one-step shift. When a new sample (marked by a dashed circle) is added to the dataset from Fig. \ref{fig:kernel_reg}A, it induces a downward shift in the effective targets of its neighbors (left side), while leaving distant points unaffected (right side). In this simulation ($\eta=0.5, \gamma=1$), the new point itself does not experience a label shift because the condition $\eta [\gamma + k_{n+1}] - 1 = 0$ is satisfied. This framework thus allows us to precisely localize where and how online predictors deviate from the offline ideal, providing a potential diagnostic tool for online learning performance. 


\section{Target Correction}
While the analysis on the effective target shift in the previous section helps identify which samples accumulate excess error during online learning, our ultimate goal is to actively correct this shift.
In this section, we investigate the correction of the effective target shift. First, we demonstrate the existence of a set of corrected targets that allows an online learner to exactly recover the offline regression predictor, providing its closed-form expression. We then derive an iterative formulation of this target correction that stabilizes the learning dynamics and restores practical causality.

By inverting the relationship defining the effective target shift (using Eqs. \ref{eq_batch_kernel_regression} and \ref{eq_okr_closed_zero_reg}), we define the corrected targets $Y^c_n$ as:
\begin{align} \label{eq_def_target_correction_kr}
Y^c_n \equiv Y_n \left( \gamma I + K(X_n, X_n) \right)^{-1} \left( \tfrac{1}{\eta} I + K^U(X_n, X_n)\right).
\end{align}
\begin{theorem} \label{target_correction_theorem}
Assuming $\eta, \gamma > 0$, the predictor obtained via online kernel-based learning $W_{t+1} = W_t - \eta (W_t \phi(\vx_t) - \vy_t^c ) \phi(\vx_t)^T$ starting from $W_1=0$ on the data $(X_n, Y_n^c)$ is mathematically equivalent to the offline learning predictor trained on the original data $(X_n, Y_n)$. Namely:
\begin{align}
f_{\text{on}}(\vx; X_n, Y_n^c) = f_{\text{off}} (\vx; X_n, Y_n).
\end{align}
\end{theorem}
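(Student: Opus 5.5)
The plan mirrors the proof of Theorem \ref{target_shift_theorem}, run in reverse. The key observation is that the closed form of Lemma \ref{lemma_online_kr_closed} (with $\gamma=0$ on the online side, i.e.\ Eq.~\ref{eq_okr_closed_zero_reg}) is \emph{linear in the targets}. The online recursion $W_{t+1} = W_t - \eta (W_t \vphi_t - \vy^c_t)\vphi_t^T$ with $W_1=0$ involves the targets only through the driving term $\eta \vy^c_t \vphi_t^T$. Unrolling it as in Eq.~\ref{eq_Wt_online_kr_sgd} (with $\gamma=0$) gives $W_{n+1} = \sum_i \eta \vy^c_i \vphi_i^T \prod_{j>i}(I - \eta\vphi_j\vphi_j^T)$. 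This is the same expression with $Y_n$ replaced by $Y^c_n$, and the inputs $X_n$, and hence $K^U(X_n,X_n)$, are unchanged. We may therefore invoke Eq.~\ref{eq_okr_closed_zero_reg} verbatim with $Y^c_n$ in place of $Y_n$:
\begin{align*}
f_{\text{on}}(\vx_*; X_n, Y^c_n) = Y^c_n \left( \tfrac{1}{\eta} I + K^U(X_n, X_n) \right)^{-1} k(X_n, \vx_*).
\end{align*}

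Next we substitute the definition Eq.~\ref{eq_def_target_correction_kr} of $Y^c_n$. The right factor $\left(\tfrac{1}{\eta} I + K^U\right)$ in $Y^c_n$ cancels against the inverse, leaving
\begin{align*}
Y_n \left(\gamma I + K(X_n,X_n)\right)^{-1} k(X_n,\vx_*),
\end{align*}
which is exactly $f_{\text{off}}(\vx_*; X_n, Y_n)$ by Eq.~\ref{eq_batch_kernel_regression}.

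The only step requiring care is justifying that both inverses exist, and this is where the hypotheses $\eta,\gamma>0$ enter. The matrix $\tfrac{1}{\eta}I + K^U$ is upper triangular with diagonal entries $1/\eta>0$, so its determinant is $\eta^{-n}\neq 0$ and it is invertible. The Gram matrix $K$ is positive semidefinite, so $\gamma I + K$ is positive definite when $\gamma>0$ and hence invertible. Thus $Y^c_n$ is well defined and the cancellation is legitimate. I expect no real obstacle beyond making explicit that the lemma's derivation used nothing about the targets other than linearity, so it applies to arbitrary (in particular corrected) targets.
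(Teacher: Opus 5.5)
Your proposal is correct and follows the paper's own argument. The paper proves the result "by a parallel argument to Theorem~\ref{target_shift_theorem}": apply the closed form Eq.~\ref{eq_okr_closed_zero_reg} with $Y^c_n$ as the targets, then cancel $(\tfrac{1}{\eta}I + K^U)$ to recover $f_{\text{off}}(\vx; X_n, Y_n)$. Your remarks on linearity in the targets and on invertibility are small additions the paper leaves implicit: $\tfrac{1}{\eta}I+K^U$ is invertible because it is triangular with diagonal $1/\eta$, and $\gamma I + K$ is positive definite.
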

The proof follows straightforwardly from a parallel argument to Theorem \ref{target_shift_theorem}. This result implies that if the true targets are known a priori, one can construct a set of "distorted" targets $Y^c_n$ with which an online learner can provably match offline performance. Counterintuitively, this means that an online model can achieve better performance by training on intentionally altered labels rather than the ground-truth targets.

Figure \ref{fig:kernel_reg}C illustrates this correction for the 1D regression task previously shown in Figure \ref{fig:kernel_reg}A. In the left-hand region, where the effective targets (orange crosses) are erroneously shifted downward by the online process, the corrected targets (green open circles) introduce an upward bias to cancel the induced error. Conversely, on the right side, the corrected targets impose a downward correction. Theorem \ref{target_correction_theorem} ensures that these counter-biases exactly align the online learning trajectory with the offline baseline, meaning that the online predictor constructed with green corrected points exactly matches the blue curve, $f_{\off} (x)$. 

As with target shift, we can analyze the step-wise evolution of these corrections to understand their directionality (see Appendix \ref{subsec_target_correction_dynamics_kr}). This reveals that a one-step update of the target correction scales with the difference between offline predictor $f_{\text{off}}$ and the true target $\vy$, systematically biasing the updated targets toward the offline prediction.

\subsection{Iterative Target Correction}
While the exact target correction guarantees offline performance, its construction (Eq. \ref{eq_def_target_correction_kr}) requires full access to the entire dataset a priori, breaking the causal requirement of online learning. Furthermore, enforcing this exact correction can result in highly unstable learning dynamics (as will be demonstrated in Fig. \ref{fig:conv_ntk}C and D). To address these limitations, we derive an iterative formulation.

We first cast the corrected targets (Eq. \ref{eq_def_target_correction_kr}) as the solution to a minimization problem. Given an arbitrary set of targets $Z \in \Real^{d_y \times n}$, the online prediction is 
$f_{\text{on}} (\vx;X,Z) = Z (\tfrac{1}{\eta} I + K^U (X,X))^{-1} k(X, \vx)$. 
Thus, the discrepancy between the online and offline predictors is minimized when:
\begin{align}
Z = \arg \min_Z \lVert Z (\tfrac{1}{\eta}I + K^U(X,X))^{-1} k(X, \cdot) - Y (\gamma I + K(X,X))^{-1} k(X, \cdot) \rVert_{\mathcal{H}}^2,
\end{align}
where $\mathcal{H}$ is the Reproducing Kernel Hilbert Space (RKHS) induced by $k(X, \cdot)$. Solving this recovers the original target correction (Eq. \ref{eq_def_target_correction_kr}; see Appendix \ref{subsec_lossmin_tc}).

To adapt this to an iterative, causal setting, we optimize targets in sequential chunks. Consider the optimization of novel targets $Z_{\text{new}} = \{ \vz_{N-b+1}, \dots, \vz_{N}\}$ for incoming inputs $X_{\text{new}} = \{ \vx_{N-b+1}, \dots, \vx_N \}$, conditioned on past inputs $X_{\text{past}} = \{ \vx_{1}, \dots, \vx_{N-b} \}$ and their frozen corrected targets $Z_{\text{past}} = \{ \vz_1, \dots, \vz_{N-b}\}$. Here, $b$ is the look-ahead horizon (or mini-batch size) over which targets are simultaneously optimized. While we still assume a sample-by-sample online update for the actual weight learning, the targets themselves are generated iteratively in blocks of size $b$.

Denoting the union of inputs as $X_{\text{tot}} = X_{\text{past}} \cup X_{\text{new}}$, the block-wise loss function becomes:
\begin{align} \label{eq_iter_correction_loss}
\ell(Z_{\text{new}} ; Z_{\text{past}}, X_{\text{tot}}) 
&= \frac{1}{2} \llv [Z_{\text{past}}, Z_{\text{new}}] ( \tfrac{1}{\eta}I + K^U_{\text{tot}} )^{-1} k(X_{\text{tot}} , \cdot) - Y_{\text{tot}} (\gamma I + K_{\text{tot}})^{-1} k(X_{\text{tot}}, \cdot) \rrv_{\mathcal{H}}^2
\nonumber \\
&\quad + \frac{\gamma_o}{2} \llv [Z_{\text{past}}, Z_{\text{new}}] (\tfrac{1}{\eta} I + K^U_{\text{tot}} )^{-1} \rrv_F^2,
\end{align}
where $Y_{\text{tot}}$ represents the true labels for the accumulated data, and the second term is a Tikhonov regularizer ($\gamma_o > 0$) introduced to enforce stability. Minimizing this objective yields the following update rule:
\begin{proposition} \label{prop_iter_correction}
Minimization of Eq. \ref{eq_iter_correction_loss} yields a unique solution for $Z_{\text{new}}$ given $\gamma, \eta, \gamma_o > 0$:
\begin{align} \label{eq_citer_Znew}
 Z_{\text{new}} 
 = Y_{\text{new}} 
 + (Y_{\text{new}} - f_{\text{on}}(X_{\text{new}}; X_{\text{past}},  Z_{\text{past}}))C_{\text{on}} 
 + (f_{\text{off}} (X_{\text{new}}; X_{\text{past}}, Y_{\text{past}}) - Y_{\text{new}}) C_{\text{off}},
\end{align}
where the coefficient matrices are defined as:
\begin{subequations} \label{eq_def_Con_Coff}
\begin{align}
  C_{\text{on}} 
  &\equiv \left((\gamma_o I + K_{nn})^{-1} (\tfrac{1}{\eta} I + K^U_{nn}) - I \right), \\
  C_{\text{off}} 
  &\equiv \gamma \left( [\gamma I + K_{nn}] - K_{np} [\gamma I + K_{pp}]^{-1} K_{pn} \right)^{-1} (\gamma_o I + K_{nn})^{-1} (\tfrac{1}{\eta} I + K^U_{nn}),
\end{align}
\end{subequations}
with $K_{pp} = K(X_{\text{past}}, X_{\text{past}})$ and $K_{np} = K(X_{\text{new}}, X_{\text{past}})$.
\end{proposition}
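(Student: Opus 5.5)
The plan is to change variables so that the loss becomes a strictly convex quadratic in an auxiliary matrix, solve its first-order condition, and translate back using block-triangular and Schur-complement identities.

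\textbf{Block structure and change of variables.} Order the indices as (past, new) and write $A \equiv \tfrac{1}{\eta}I + K^U_{\text{tot}}$. Every past index precedes every new index, so the strictly upper-triangular part has full off-diagonal block $K_{pn}$ and zero lower-left block:
\begin{align}
A = \begin{pmatrix} A_p & K_{pn} \\ 0 & A_n \end{pmatrix}, \qquad A_n = \tfrac{1}{\eta}I + K^U_{nn}.
\end{align}
Here $A_n$ is triangular with diagonal $1/\eta$, hence invertible, and likewise for $A_p$. Block inversion gives $U \equiv [Z_{\text{past}}, Z_{\text{new}}]A^{-1} = [U_p, U_n]$ with
\begin{align}
U_p = Z_{\text{past}}A_p^{-1}, \qquad U_n = \bigl(Z_{\text{new}} - Z_{\text{past}}A_p^{-1}K_{pn}\bigr)A_n^{-1}.
\end{align}
By Lemma \ref{lemma_online_kr_closed} (with $\gamma=0$), $Z_{\text{past}}A_p^{-1}K_{pn} = f_{\text{on}}(X_{\text{new}}; X_{\text{past}}, Z_{\text{past}}) \equiv f^p_{\text{on}}$. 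So $Z_{\text{new}} \mapsto U_n$ is an affine bijection, $U_p$ is frozen, and it suffices to minimize over $U_n$.

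\textbf{Solving for $U_n$.} Set $V \equiv Y_{\text{tot}}(\gamma I + K_{\text{tot}})^{-1} = [V_p, V_n]$. The reproducing property turns the loss into
\begin{align}
\tfrac12 \mathrm{tr}\bigl[(U-V)K_{\text{tot}}(U-V)^T\bigr] + \tfrac{\gamma_o}{2}\lVert U\rVert_F^2 .
\end{align}
As a function of $U_n$ its Hessian is $K_{nn}+\gamma_o I \succ 0$, so the loss is strictly convex and the minimizer is unique. Uniqueness then transfers to $Z_{\text{new}}$ through the bijection. The stationarity condition is
\begin{align}
(U_p - V_p)K_{pn} + (U_n - V_n)K_{nn} + \gamma_o U_n = 0 .
\end{align}

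\textbf{Simplifying with two identities.} First, $U_pK_{pn} = f^p_{\text{on}}$ as above. Second, the new-column block of $V(\gamma I + K_{\text{tot}}) = Y_{\text{tot}}$ gives $V_pK_{pn} + V_nK_{nn} = Y_{\text{new}} - \gamma V_n$. Together these yield
\begin{align}
U_n = \bigl(Y_{\text{new}} - \gamma V_n - f^p_{\text{on}}\bigr)(\gamma_o I + K_{nn})^{-1}.
\end{align}
The Schur-complement formula for the new block of $(\gamma I+K_{\text{tot}})^{-1}$ gives
\begin{align}
V_n = \bigl(Y_{\text{new}} - Y_{\text{past}}(\gamma I + K_{pp})^{-1}K_{pn}\bigr)S^{-1} = (Y_{\text{new}} - f^p_{\text{off}})S^{-1},
\end{align}
where $S = \gamma I + K_{nn} - K_{np}(\gamma I+K_{pp})^{-1}K_{pn}$ and $f^p_{\text{off}} \equiv f_{\text{off}}(X_{\text{new}};X_{\text{past}},Y_{\text{past}})$ by Eq. \ref{eq_batch_kernel_regression}. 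The matrix $S$ is the Schur complement of a PSD kernel matrix shifted by $\gamma I$, so $S \succeq \gamma I$ and it is invertible.

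\textbf{Back-substitution.} Use $Z_{\text{new}} = f^p_{\text{on}} + U_nA_n$ and write $f^p_{\text{on}} = Y_{\text{new}} - (Y_{\text{new}} - f^p_{\text{on}})$. The $(Y_{\text{new}}-f^p_{\text{on}})$ terms collect into the coefficient $(\gamma_o I+K_{nn})^{-1}A_n - I = C_{\text{on}}$. The $V_n$ term becomes $(f^p_{\text{off}} - Y_{\text{new}})\,\gamma S^{-1}(\gamma_o I + K_{nn})^{-1}A_n = (f^p_{\text{off}} - Y_{\text{new}})C_{\text{off}}$. This is exactly Eq. \ref{eq_citer_Znew} with the coefficients of Eq. \ref{eq_def_Con_Coff}.

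The main obstacle is the bookkeeping in the simplification step: recognising that $V_pK_{pn}+V_nK_{nn}$ collapses via $V(\gamma I+K_{\text{tot}})=Y_{\text{tot}}$, and keeping right-multiplication order consistent so that $S^{-1}$, $(\gamma_o I+K_{nn})^{-1}$ and $A_n$ appear in the order stated. I would do that step carefully in block form before assembling the final expression.
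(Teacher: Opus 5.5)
Your proof is correct. It uses the same ingredients as the paper's: first-order conditions, block-triangular inversion, and the Schur complement of $\gamma I + K_{\text{tot}}$. It arranges them more economically, though.

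The paper differentiates directly in $Z$ and gets the normal equation $Z_{\text{new}} = (G - Z_{\text{past}}B)C^{-1}$. Here $B$ and $C$ are blocks of $(\tfrac{1}{\eta}I + K^U_{\text{tot}})^{-1}(\gamma_o I + K_{\text{tot}})(\tfrac{1}{\eta}I + K^L_{\text{tot}})^{-1}$, and $G$ is a block of $Y_{\text{tot}}(\gamma I + K_{\text{tot}})^{-1}K_{\text{tot}}(\tfrac{1}{\eta}I + K^L_{\text{tot}})^{-1}$. Evaluating these requires three things: block inverses of both the upper- and lower-triangular factors, the full block inverse of $\gamma I + K_{\text{tot}}$, and a separate simplification of $G$ via $K_{nn} - K_{np}(\gamma I + K_{pp})^{-1}K_{pn} = Q - \gamma I$.

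Your substitution $U = ZA^{-1}$ removes the triangular sandwich from the quadratic form. The triangular structure then enters only once, through the affine map $Z_{\text{new}} \mapsto U_n$, which is exactly where $f_{\text{on}}$ appears. The simplification of $G$ is replaced by the one-line identity $V(\gamma I + K_{\text{tot}}) = Y_{\text{tot}}$, plus only the new column block of $(\gamma I + K_{\text{tot}})^{-1}$. That is where $f_{\text{off}}$ and your $S$ (the paper's $Q$) appear.

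You also prove uniqueness explicitly, using the Hessian $\gamma_o I + K_{nn} \succ 0$ and the bijectivity of the change of variables. The paper leaves uniqueness implicit in the invertibility of $C = A_n^{-1}(\gamma_o I + K_{nn})A_n^{-T}$, which is the same fact in the original coordinates.

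The paper's computation has the merit of staying in the variables the algorithm actually updates. Yours is shorter and needs less bookkeeping. It also makes clear why the answer splits into a $C_{\text{on}}$ term driven by the online residual and a $C_{\text{off}}$ term driven by the offline residual.
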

See Appendix \ref{subsec_iter_c_proof} for the full derivation.
This formulation disentangles two mechanisms of correction. The first term, scaled by $C_{\text{on}}$, acts as a sample-wise learning rate modulation driven by the online prediction error. This can be seen by rearranging Eq. \ref{eq_citer_Znew} to highlight the modified residual:
\begin{align}
Z_{\text{new}} - f_{\text{on}}= (Y_{\text{new}} - f_{\text{on}}) (I + C_{\text{on}}) + (f_{\text{off}} - Y_{\text{new}} ) C_{\text{off}},
\end{align}
where $f_{\text{on}}$ and $f_{\text{off}}$ represent the online and offline predictions on $X_{\text{new}}$, respectively.
The second term, scaled by $C_{\text{off}}$, injects the offline estimation structure into the targets. Crucially, the offline predictor $f_{\text{off}}$ in this formulation relies only on $X_{\text{past}}$, $X_{\text{new}}$, and $Y_{\text{past}}$, circumventing the causality bottleneck, and allowing for real-time target shifts without requiring future labels.

\subsection{Example: Conv-NTK Regression on MNIST}
\begin{figure*}[t]
  \begin{center}
    \centerline{\includegraphics[width=1.0\linewidth]{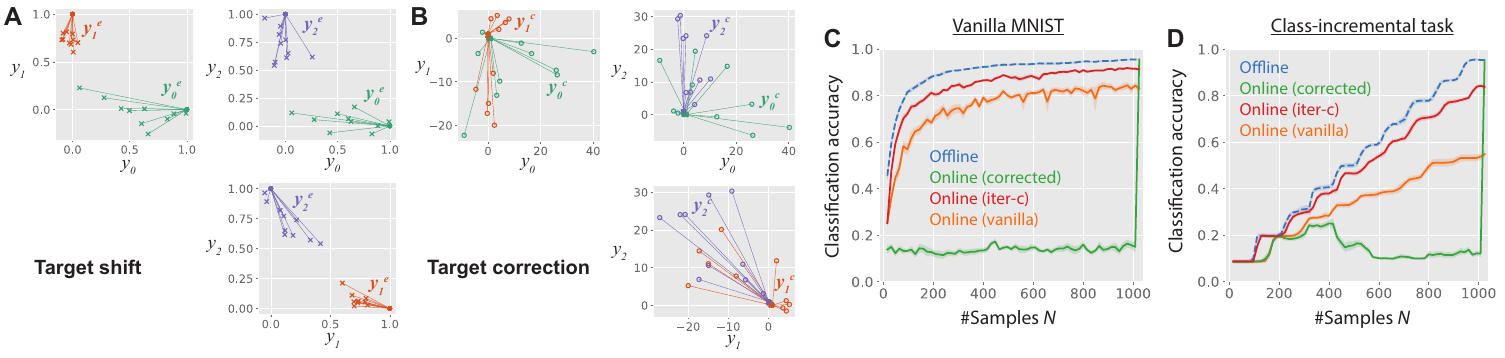}}
    \caption{
    Effective target shifts and their correction in NTK regression applied to MNIST. 
    \textbf{A, B)} Visualization of effective target shifts (A) and their correction (B). Three panels depict the first three dimensions of the output space corresponding to labels 0, 1, and 2. For instance, green lines in the top-left panel in A depict how the one-hot vector of label 0, $\vy_0 = [1,0]$, exhibits effective shifts under online NTK regression (10 lines represent 10 training samples belonging to label 0). 
    \textbf{C)} Learning curves of offline (blue) and online kernel regression with the true (orange), corrected (green), and iteratively-corrected (red) targets. 
    \textbf{D)} The same as C, but in a class-incremental setting. In panels C and D, accuracy was measured by test accuracy, and shaded areas are the standard error of the mean (SEM) over 10 random seeds.   
    See Appendix \ref{subsec_ntk_reg_det} for details. 
    }
    \label{fig:conv_ntk}
  \end{center}
\end{figure*}

Figure \ref{fig:conv_ntk} demonstrates iterative target correction on the MNIST image recognition task, utilizing the NTK of a shallow, infinite-width convolutional neural network \citep{neuraltangents2020}. Following standard NTK practice, we framed the classification as a regression problem via mean-squared error \citep{jacot2018neural, arora2019exact,liu2020linearity}.

Visualizing the effective target shift in the 3D subspace spanned by the digits '0', '1', and '2', we observed that online learning consistently causes a shrinkage of the target vectors toward the origin (Fig. \ref{fig:conv_ntk}A; filled circles are true one-hot targets, x-marks are effectively shifted targets after 1024 online updates). 
The target correction compensates for this shrinkage by proactively pushing the target vectors apart (Fig. \ref{fig:conv_ntk}B). 
Interestingly, the correction flips the signs of certain sample targets entirely (e.g., orange points with negative $y_1$ in the top-left panel), highlighting the non-trivial geometry of optimal label correction.

When evaluating test accuracy (Fig. \ref{fig:conv_ntk}C), vanilla online learning underperformed the offline NTK baseline (orange vs. blue lines), as expected. Training sequentially with the exact corrected labels (Eq. \ref{eq_def_target_correction_kr}) eventually matches the offline NTK (green line), but suffered from instability during intermediate learning phases. In contrast, the iterative target correction ("iter-c"; red line) maintained near-offline stability and performance throughout the entire learning trajectory. Note that both offline $\gamma$ and online $\eta$ were optimized for peak final accuracy (see Fig. \ref{fig:conv_ntk_supp} in Appendix), confirming that iter-c's superiority stems from the sample-wise modulation via $C_{\text{on}}$ and $C_{\text{off}}$ rather than global hyperparameter tuning.

Finally, we evaluated iter-c in a highly non-stationary class-incremental learning setting, presenting all samples belonging to label '0' (approx. 100 samples) before moving to label '1'. Under these conditions, vanilla SGD struggled to learn the task effectively due to catastrophic forgetting. However, learning under iterative target correction performed closed to offline levels (Fig. \ref{fig:conv_ntk}D), demonstrating its resistance to temporal distribution shifts.

\section{Application of Target Correction to Non-linear Models} \label{sec_cifar10_app}

Building upon the offline-equivalent expression of online kernel regression, we have established the theoretical mechanisms of target shift and demonstrated how iterative target correction can improve online learning performance. We now extend this framework beyond the linear kernel regime to evaluate online mini-batch SGD in non-linear models.
In deep neural networks, features evolve during training \citep{fort2020deep, geiger2020disentangling, radhakrishnan2024mechanism}. Therefore, a closed-form, static target correction is no longer theoretically guaranteed. Nonetheless, we hypothesize that applying our correction heuristically, using an evolving empirical approximation of the kernel, can still improve online learning performance, particularly in continual learning settings. While this is an out-of-domain application of our exact theory, it serves as an important proof of principle for applying linear-regime insights to non-linear models.

To adapt target correction for non-linear networks, we computed the empirical NTK, $\nabla_{\vw} f^T \nabla_{\vw} f$, directly from the network and substituted it into Eq. \ref{eq_citer_Znew} to calculate the corrected targets. To account for feature learning while keeping computational costs manageable, we updated the empirical NTK only at the beginning of each new task and, in multi-epoch learning, at the beginning of each within-task epoch. Furthermore, to align with standard deep learning practices, we utilized mini-batch SGD rather than strictly sample-by-sample updates. Consequently, we replaced the strictly upper-triangular matrix $K^U$ in Eq. \ref{eq_citer_Znew} with its block-triangular counterpart $K^{bU}$ from Eq. \ref{eq_minibatch_main} (see Appendix \ref{subsec_nonlinear_cl_det} for full implementation details).

\begin{figure*}[t]
  \begin{center}
 \centerline{\includegraphics[width=1.0\linewidth]{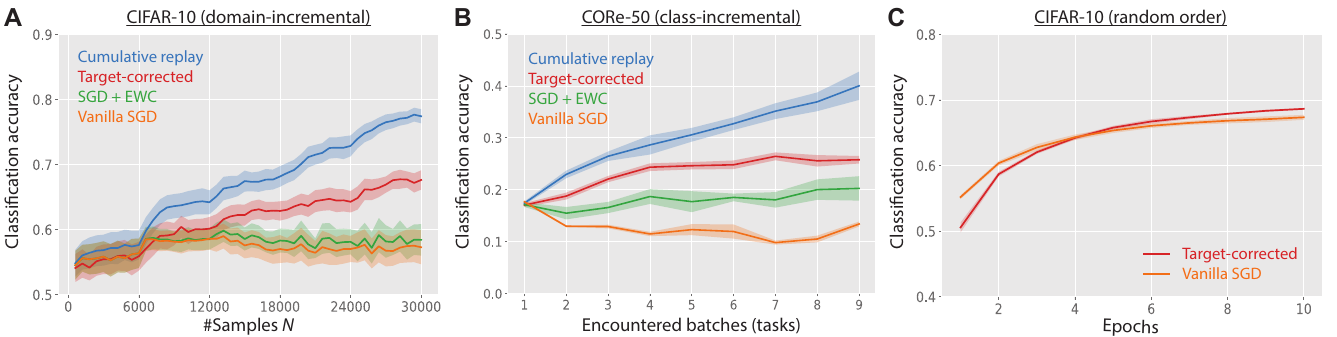}}
    \caption{
    Application of iterative target correction to SGD training of nonlinear neural networks. 
    \textbf{(A)} Domain-incremental setting where 10 categories from CIFAR-10 are split into five binary tasks. We used a single head for all five tasks to make the task challenging. 
    \textbf{(B)} CORe-50 dataset, a class-incremental continual learning task with 50 categories in total, split into 9 tasks. 
    \textbf{(C)} The vanilla CIFAR-10 task with random sample order. 
    Error bars are SEMs across 10 (A) and 5 (B,C) random seeds.
    See Appendix \ref{subsec_nonlinear_cl_det} for implementation details. 
    }
    \label{fig:conv_sgd}
  \end{center}
\end{figure*}

We first evaluated iterative target correction in a domain-incremental continual learning setting based on CIFAR-10 \citep{Krizhevsky09learningmultiple}, where the dataset was split into five binary classification tasks solved sequentially with a single head. 
Online learning with iterative target correction significantly outperformed both vanilla SGD and the Elastic Weight Consolidation (EWC) baseline \citep{kirkpatrick2017overcoming} (Fig. \ref{fig:conv_sgd}A; red vs. orange and green lines), although its performance was naturally bounded below that of cumulative replay, where the model was trained jointly on all previously seen data (red vs. blue lines). Learning rates and regularization strengths were optimized independently for each condition, supporting the robustness of this advantage (see Figs. \ref{fig:conv_sgd_supp}B-C in Appendix). In this setting, we also observed that target correction with a fixed NTK outperformed vanilla SGD and EWC (Fig. \ref{fig:conv_sgd_supp}A).

To test whether this advantage extends beyond the artificial CIFAR-10 task decomposition, we next evaluated the method on the CORe50 continual learning benchmark \citep{lomonaco2017core50} (Fig. \ref{fig:conv_sgd}B). While the absolute performance was limited by our simplified architectural setup, and the performance degradation was observed in vanilla SGD (orange line) due to task difficulty as noted in the original CORe50 benchmark \citep{lomonaco2017core50}, the relative advantage of the corrected targets remained consistent. 

Together, these results indicate that iterative target correction can improve online learning in nonlinear networks across multiple continual learning settings. We emphasize, however, that in its current formulation, iterative target correction is not intended as a practical, drop-in replacement for state-of-the-art continual learning algorithms. Dynamically updating and inverting the empirical NTK over past data incurs substantial computational overhead. Rather, these experiments serve as a proof of principle for our theoretical insight from kernel regression: an online model can achieve better predictions when trained on corrected labels than when trained directly on the ground-truth targets.

Finally, we asked whether the same benefit appears in a standard randomly ordered CIFAR-10 image-recognition task. In contrast to the continual-learning settings, corrected targets did not clearly improve single-pass mini-batch SGD. However, an advantage emerged under multi-epoch training (Fig. \ref{fig:conv_sgd}C), suggesting that the benefit of target correction is most pronounced in the presence of structured distributional shifts.

\section{Discussion}
Overall, this work presents a fundamental framework to analyze the discrepancy between online and offline learning in feature-based learning settings, providing two core theorems on online-to-offline conversion. Furthermore, it proposes target correction methods for efficient online learning, which we applied to continual image classification tasks as a proof of principle.

\paragraph{Limitations}
While Theorem \ref{target_correction_theorem} guarantees that the online learner can exactly match the offline predictor, this naturally implies that generalization performance remains bottlenecked by the quality of the chosen offline regularizer, $\gamma$. More broadly, strictly enforcing exact offline equivalence removes the implicit regularization of mini-batch SGD noise, which helps networks escape sharp minima and achieve better generalization \citep{keskar2016large, zhu2018anisotropic}. 
Nevertheless, in the context of continual learning, joint multi-task training (the offline ideal) is widely accepted as the theoretical upper bound for sequential learning \citep{hadsell2020embracing, mai2022online, peng2023ideal}, justifying our objective to match it.

For theoretical rigor, we primarily formulated our analysis in the kernel regression setting. While this regime does not fully capture the dynamics of deep learning with active feature learning, we have empirically demonstrated that our framework can still be applied to deep networks by utilizing a periodically updated empirical NTK. While our numerical experiments do not achieve state-of-the-art accuracy benchmarks, partially due to simplified architectural choices required for computationally tracking the NTK, they provide robust proof-of-principle support for the practical applicability of target correction.

Finally, while the exact estimation of effective target shift and its full correction require prior knowledge of offline data, we derived an iterative approximation to restore causality and practical online applicability. Beyond online continual learning, the exact formulation of effective target shift and correction holds promising potential for applications in learning process diagnosis \citep{koh2017understanding}, machine teaching \citep{zhu2018overview}, and knowledge distillation \citep{hinton2015distilling}.
More generally, our work establishes a new mathematical lens through which to analyze online learning, opening up a promising line of both theoretical and empirical investigation.

\begin{ack}
This work was partially supported by the McDonnell Center for Systems Neuroscience.
\end{ack}

{\small
\bibliography{refs}
}

\newpage
\appendix
\section{Proofs and derivations}
\subsection{Mathematical notation}
Throughout the manuscript, we used lower-case italic letters for scalar variables, lower-case bold-italic letters for vectors, and upper-case letters for matrices.
Vectors are defined as column vectors; thus $\bm{a}^T \bm{b}$ and $\bm{a} \bm{b}^T$ represent the inner and outer products of two vectors $\bm{a}$ and $\bm{b}$, respectively. 
For a vector $\bm{v}$, $[\bm{v}]_i$ represents the $i$-th element of the vector, and for a matrix $M$, $[M]_{ij}$ represents the $(i,j)$-th element of the matrix.

\subsection{Proof of Lemma \ref{lemma_online_kr_closed}} \label{subsec_proof_lemma_onlie_kr_closed}
From Eq. \ref{eq_Wt_online_kr_sgd}, the function obtained by online learning from data $(X_n, Y_n)$ obeys
\begin{align}
    f_{on}(\vx_*; X_n, Y_n)
    &= W_{n+1} \phi(\vx_*) 
    \nonumber \\
    &= \sum_{i=1}^n \eta \vy_i \vphi_i^T \prod_{j=i+1}^n \left[ (1-\eta \gamma) I - \eta \vphi_j \vphi_j^T \right] \vphi_*
   + W_1 \prod_{j=1}^n \left[ (1-\eta\gamma)I - \eta \vphi_j \vphi_j^T \right] \vphi_*,
\end{align}
where $\vphi_* \equiv \phi(\vx_*)$. 
The first term is rewritten as
\begin{align}
 & \vphi_i^T \prod_{j=i+1}^n \left[ (1-\eta \gamma) I - \eta \vphi_j \vphi_j^T \right] \vphi_*
 \nonumber \\
 &= (1-\eta \gamma)^{n-i} k(\vx_i, \vx_*) 
 + \sum_{\ell=1}^{n-i} (1 - \eta\gamma)^{n-i-\ell} (-\eta)^{\ell} \sum_{i+1 \leq j_1 < ... < j_\ell \leq n} k(\vx_i, \vx_{j_1}) k(\vx_{j_1}, \vx_{j_2}) ... k(\vx_{j_\ell}, \vx_*)
 \nonumber \\
 &= (1-\eta \gamma)^{n-i} k(\vx_i, \vx_*) 
 + \sum_{\ell=1}^{n-i} (1 - \eta\gamma)^{n-i-\ell} (-\eta)^{\ell} \left[ \left( K^U (X_n, X_n) \right)^\ell k (X_n, \vx_*) \right]_i,
\end{align}
where $[\bm{v}]_i$ represents the $i$-th element of vector $\bm{v}$. 
In the last line, we define an $n \times n$ matrix $K^U(X_n, X_n)$ as the strictly upper-triangular component of the kernel matrix:
\begin{align} \label{def_KU}
\left[ K^U (X_n, X_n) \right]_{ij} = \begin{cases}
k(\vx_i, \vx_j) & (i < j) \\
0 & (i \geq j).
\end{cases}
\end{align}
The last line follows because $K^U$ satisfies
\begin{align}
    \left[ \left( K^U (X_n, X_n) \right)^\ell k (X_n, \vx_*) \right]_i
    &= \sum_{j_1=1}^n \sum_{j_2=1}^n ... \sum_{j_\ell=1}^n
    K^U_{i j_1} K^U_{j_1 j_2} ... K^U_{j_{\ell-1}j_\ell} k(\vx_{j_\ell}, \vx_*) 
    \nonumber \\
    &= \sum_{j_1=i+1}^n \sum_{j_2=j_1+1}^n ... \sum_{j_\ell=j_{\ell-1}+1}^n
    k(\vx_i, \vx_{j_1}) k(\vx_{j_1}, \vx_{j_2}) ... k(\vx_{j_{\ell-1}}, \vx_{j_\ell}) k(\vx_{j_\ell}, \vx_*) 
    \nonumber \\
    &= \sum_{i+1 \leq j_1 < j_2 < ... < j_\ell \leq n}
    k(\vx_i, \vx_{j_1}) k(\vx_{j_1}, \vx_{j_2}) ... k(\vx_{j_{\ell-1}}, \vx_{j_\ell}) k(\vx_{j_\ell}, \vx_*). 
\end{align}
Therefore, the coefficient of $\vy_i$ becomes
\begin{align}
    \vphi_i^T \prod_{j=i+1}^n \left[ (1-\eta \gamma) I - \eta \vphi_j \vphi_j^T \right] \vphi_*
    &= (1 - \eta \gamma)^{n-i} \left( k(\vx_i, \vx_*) 
    + \sum_{\ell=1}^{n-i} \left( \tfrac{-\eta}{1 - \eta\gamma} \right)^{\ell} \left[ \left( K^U (X_n, X_n) \right)^\ell k (X_n, \vx_*) \right]_i \right)
    \nonumber \\
    &= (1 - \eta \gamma)^{n-i} \sum_{\ell=0}^{n-i} \left( \tfrac{-\eta}{1 - \eta\gamma} \right)^{\ell} \left[ \left( K^U (X_n, X_n) \right)^\ell k (X_n, \vx_*) \right]_i
    \nonumber \\
    &= (1 - \eta \gamma)^{n-i} \sum_{\ell=0}^n  \left[ \left( \tfrac{-\eta}{1 - \eta\gamma} K^U (X_n, X_n) \right)^\ell k (X_n, \vx_*) \right]_i
    \nonumber \\
    &= (1 - \eta \gamma)^{n-i} \left[ \left(
    I + \tfrac{\eta}{1 - \eta\gamma} K^U (X_n, X_n) \right)^{-1} k(X_n, \vx_*)
    \right]_i.
\end{align}
In the third line, we used that, $\left[ \left(  K^U (X_n, X_n) \right)^\ell k (X_n, \vx_*) \right]_i$ is zero for $\ell > n-i$. 
The last line follows because, for an arbitrary square matrix $A$,
\begin{align}
    (I + A) \sum_{\ell=0}^n (-A)^{\ell}
    = I + (-1)^n A^{n+1},
\end{align}
and a size $n$ strictly upper triangular matrix $K^U$ satisfies $(K^U)^{n+1} = 0$. 
Because eigenvalues of a triangular matrix are its diagonal components, $I + \tfrac{\eta}{1 - \eta\gamma} K^U (X_n, X_n)$ is invertible (unless $1 = \eta\gamma$).

Let us denote a size $n$ diagonal matrix $D_n$ by 
\begin{align}
    [D_n]_{ij} = \begin{cases}
        (1 - \eta \gamma)^{n-i} & (i=j) \\
        0 & (i \neq j), 
    \end{cases}
\end{align}
then under $W_1 = 0$, $f_{on}$ follows
\begin{align}
    f_{on}(\vx_*; X_n, Y_n)
    &= \eta \sum_{i=1}^n \vy_i (1 - \eta \gamma)^{n-i} \left[ \left(
    I + \tfrac{\eta}{1 - \eta\gamma} K^U (X_n, X_n) \right)^{-1} k(X_n, \vx_*)
    \right]_i
    \nonumber \\
    &= \eta Y_n D_n \left(
    I + \tfrac{\eta}{1 - \eta\gamma} K^U (X_n, X_n) \right)^{-1} k(X_n, \vx_*)
    \nonumber \\
    &= Y_n D_n \left(
    \tfrac{1}{\eta} I + \tfrac{1}{1 - \eta\gamma} K^U (X_n, X_n) \right)^{-1} k(X_n, \vx_*).
\end{align}

\subsection{Offline expression of mini-batch online kernel regression} \label{subsec_minibatch}
Our results for sample-by-sample learning in Lemma \ref{lemma_online_kr_closed} can be extended to a mini-batch online learning setting straightforwardly. 
Let us denote the mini-batch size as $b$, and the sample set for $i$-th iteration by $(X_i, Y_i)$ where $X_i \in \Real^{d_x \times b}$ and $Y_i \in \Real^{d_y \times b}$. For brevity, we write $\phi(X_i) = \Phi_i$, $X = [X_1, ..., X_n]$, and $Y = [Y_1, ..., Y_n]$. 
Under an online gradient learning without an L2 regularizer, the weight update follows:
\begin{align} \label{def_minibatch_sgd_kr}
    W_{t+1} = W_t - \eta (W_t \Phi_t - Y_t) \Phi_t^T.
\end{align}
By applying it recursively from $W_1 = 0$, we have
\begin{align}
    W_{t+1} = \sum_{i=1}^t \eta Y_i \Phi_i^T \prod_{j=i+1}^t \left( I - \eta \Phi_j \Phi_j^T \right), 
\end{align}
and thus the predictor $f_b^{(n)} (X_*)$ after $n$-th updates obeys
\begin{align}
    f_b^{(n)} (X_*) = \sum_{i=1}^n \eta Y_i \Phi_i^T \prod_{j=i+1}^n \left( I - \eta \Phi_j \Phi_j^T \right) \Phi_*,
\end{align}
We introduce a block-wise upper triangular matrix $K^{bU} \in \Real^{nb \times nb}$ by
\begin{align} \label{def_KbU}
    \left[ K^{bU} (X, X) \right]_{block (i,j)} =
    \begin{cases}
        K (X_i, X_j) & (i < j) \\
        0 & (i \geq j),
    \end{cases}
\end{align}
where $\left[ K^{bU} \right]_{block (i,j)}$ is a submatrix of $K^{bU}$ consists of $(i-1)b+1$ to $ib$-th rows and $(j-1)b+1$ to $jb$-th columns. This is a block-wise extension of $K^U$ introduced in Eq. \ref{def_KU}. 
Fig. \ref{fig:toy_online}A contrasts $K^{bU}$ against full kernel $K$ and upper-triangular kernel $K^U$. 
Using $K^{bU}$, $f_b^{(n)}$ is written as follows:
\begin{proposition} \label{minibatch_prop}
    The predictor $f_b^{(n)} (X_*) = W_{n+1} \phi(X_*)$ obtained from $n$-steps of mini-batch learning with Eq. \ref{def_minibatch_sgd_kr} using data $X = [X_1, ..., X_n]$ and $Y = [Y_1, ..., Y_n]$ follows
    \begin{align} \label{eq_minibatch_predictor}
        f_b^{(n)} (X_*) = Y \left( \tfrac{1}{\eta} I + K^{bU} (X,X) \right)^{-1} K(X, X_*),
    \end{align}
    where $K^{bU}$ is block-wise upper triangular matrix defined in Eq. \ref{def_KbU}.
\end{proposition}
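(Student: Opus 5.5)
I will follow the proof of Lemma \ref{lemma_online_kr_closed} block by block, with $\gamma = 0$ (so $D_n = I$ and the factor $\tfrac{1}{1-\eta\gamma}$ disappears). Scalars become $b\times b$ blocks. Starting from the unrolled expression
\begin{align}
f_b^{(n)}(X_*) = \sum_{i=1}^n \eta Y_i \Phi_i^T \prod_{j=i+1}^n \left( I - \eta \Phi_j \Phi_j^T \right) \Phi_*,
\end{align}
I will expand the ordered product into a sum over increasing index sets $i+1 \le j_1 < \dots < j_\ell \le n$. The term for a given set is $(-\eta)^\ell \Phi_i^T \Phi_{j_1}\Phi_{j_1}^T \Phi_{j_2} \cdots \Phi_{j_\ell}^T \Phi_*$. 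Regrouping the inner products gives $(-\eta)^\ell K(X_i,X_{j_1}) K(X_{j_1},X_{j_2}) \cdots K(X_{j_\ell},X_*)$, which is a product of $b\times b$ blocks ending in a $b \times b_*$ block.

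Next I will identify this sum with block powers of $K^{bU}$. By the definition in Eq. \ref{def_KbU}, the $(i,j)$ block of $K^{bU}$ is nonzero only when $i<j$, and there it equals $K(X_i,X_j)$. Consequently, the $i$-th block row of $(K^{bU})^\ell K(X,X_*)$ is exactly the sum over strictly increasing chains $i<j_1<\dots<j_\ell$ of the products above. This is the same index-chasing argument as in the scalar proof, with matrix products in place of scalar products. Since the blocks do not commute, I must keep the factors in left-to-right order. That ordering matches both the ordered product convention and ordinary matrix multiplication, so nothing breaks.

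Summing over $\ell$ then gives the coefficient matrix multiplying $Y_i$:
\begin{align}
\eta \left[ \textstyle\sum_{\ell\ge 0} (-\eta K^{bU})^\ell K(X,X_*) \right]_{\text{block } i}.
\end{align}
The matrix $K^{bU}$ is strictly block upper triangular, so it is nilpotent: $(K^{bU})^{n}=0$. The Neumann series therefore terminates and equals $(I+\eta K^{bU})^{-1}$. This inverse exists because $I+\eta K^{bU}$ is unit upper triangular with determinant $1$. Finally, I assemble the sum over $i$ as the block row product $Y\,\eta(I+\eta K^{bU})^{-1}K(X,X_*)$, which equals $Y(\tfrac1\eta I + K^{bU})^{-1}K(X,X_*)$, as required.

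The only delicate step is the bookkeeping that matches the ordered product expansion with the block-power entries. In particular, within-batch terms must not appear. They do not, because each factor $I-\eta\Phi_j\Phi_j^T$ involves a whole batch at once, so chains move strictly between distinct batches. This is why the diagonal blocks of $K^{bU}$ are zero.
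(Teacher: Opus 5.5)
Your proposal is correct and takes the same route as the paper. The paper simply invokes a parallel argument to Lemma~\ref{lemma_online_kr_closed} with $\gamma=0$ to get $\Phi_i^T \prod_{j=i+1}^n (I-\eta\Phi_j\Phi_j^T)\Phi_* = [(I+\eta K^{bU})^{-1}K(X,X_*)]_{block(i)}$; you supply the details it leaves implicit (the ordered chain expansion across distinct batches, the block-power identification, and nilpotency of $K^{bU}$ terminating the Neumann series), and they check out.
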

\begin{proof}
From a parallel arguments with Lemma \ref{lemma_online_kr_closed} (under $\gamma = 0$), we have
\begin{align}
    \Phi_i^T \prod_{j=i+1}^n \left( I - \eta \Phi_j \Phi_j^T \right) \Phi_* 
    = \left[ \left( I + \eta K^{bU} \right)^{-1} K(X, X_*) \right]_{block(i)}
\end{align}
Thus, we have
\begin{align}
    f_b^{(n)} (X_*) 
    &= \sum_{i=1}^n \eta Y_i \Phi_i^T \prod_{j=i+1}^n \left( I - \eta \Phi_j \Phi_j^T \right) \Phi_*
    \nonumber \\
    &= Y \left( \tfrac{1}{\eta} I + K^{bU} (X,X) \right)^{-1} K(X, X_*).
\end{align}
\end{proof}
Notably, unlike the sample-by-sample online learning, the upper triangular components in the block diagonal components are also missing under the mini-batch setting. It makes intuitive sense because samples learned together in a mini-batch cannot interact with each other.

\begin{figure}[t!]
  \begin{center}
    \centerline{\includegraphics[width=1.0\columnwidth]{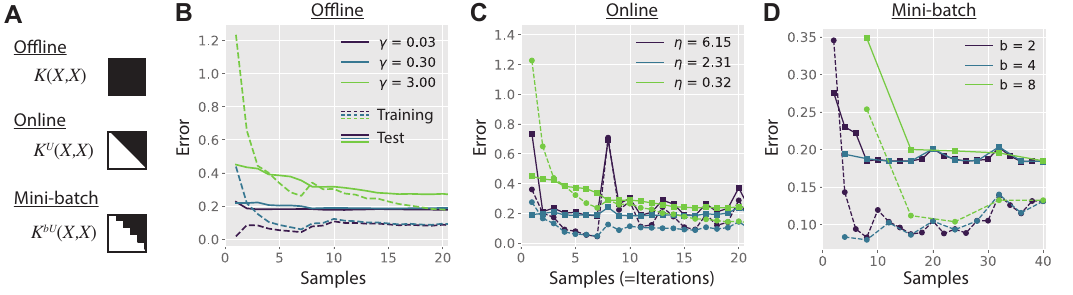}}
    \caption{ Illustration of learning curves under a random projection kernel $k(x, x') = \phi_J(x)^T \phi_J(x')$ with $\phi_J(x) = \tanh(J\vx)$. 
    \textbf{(A)} Visualization of kernels $K$, $K^U$, and $K^{bU}$.
    \textbf{(B)} Offline learning curves under various regularization amplitude $\gamma$. 
    \textbf{(C)} Online kernel regression learning curves. Points are error under iterative SGD updates $\vw \leftarrow \vw - \eta (\vw \phi_J(x) - y) \phi_J (x)^T$, while lines are offline-like expressions from Eq. \ref{eq_okr_closed_zero_reg}. 
    Learning rate $\eta$ were chosen as $\eta = 1/(\gamma + k_o)$ where $k_o \equiv E[k(x,x)]$ for $\gamma = \{ 0.03, 0.3, 3.0 \}$ for comparison with panel B. 
    \textbf{(D)} Mini-batch learning curves under mini-batch size $b=\{2,4,8\}$. Points are mini-batch SGD update while lines are offline-like expressions obtained from Eq. \ref{eq_minibatch_predictor}. 
    In B-D, each trajectory represents a single learning curve (without any averaging). We used the same random GP task illustrated in Fig. \ref{fig:kernel_reg}, and we generated $\phi_J(x)$ by randomly sampling elements of a 100-dimensional vector $J$ from a Gaussian distribution $\gauss(0,1)$. 
    }
    \label{fig:toy_online}
  \end{center}
\end{figure}

\subsection{Dynamics of target shift in online kernel regression} \label{subsec_target_shift_dynamics_kr} 
From the definition (Eq. \ref{eq_def_target_shift_kr}), the target shift after $(n+1)$-th update follows
\begin{align}
    Y^e_{n+1}
    &= Y_{n+1} \left( \tfrac{1}{\eta} I + K^U_{n+1} \right)^{-1} \left( \gamma I + K_{n+1} \right)
    \nonumber \\
    &= \begin{pmatrix} Y_n & \vy_{n+1} \end{pmatrix}
    \begin{pmatrix}
        \tfrac{1}{\eta} I + K^U_n & k(X_n, \vx_{n+1}) \\
        0 & 1/\eta \\
    \end{pmatrix}^{-1}
    \begin{pmatrix}
        \gamma I + K_n & k(X_n, \vx_{n+1}) \\
        k(X_n, \vx_{n+1})^T & \gamma + k_{n+1} \\
    \end{pmatrix}.
\end{align}
For brevity, here we defined
$K^U_n \equiv K^U (X_n, X_n)$ and $k_{n+1} \equiv k(\vx_{n+1}, \vx_{n+1})$.
Using the block matrix inverse property, we have
\begin{align}
    \begin{pmatrix}
        \tfrac{1}{\eta} I + K^U_n & k(X_n, \vx_{n+1}) \\
        0 & 1/\eta
    \end{pmatrix}^{-1}
    = \begin{pmatrix}
        \left( \tfrac{1}{\eta} I + K^U_n \right)^{-1} & - \eta \left( \tfrac{1}{\eta} I + K^U_n \right)^{-1} k(X_n, \vx_{n+1}) \\
        0 & \eta \\
    \end{pmatrix}.
\end{align}
Thus, $Y^e_{n+1}$ is written as
\begin{align}
    Y^e_{n+1}
    &= \begin{pmatrix}
        Y_n \left( \tfrac{1}{\eta} I + K^U_n \right)^{-1}, & - \eta \left[ f^{(n)}_{on} (\vx_{n+1}) - \vy_{n+1} \right]
    \end{pmatrix}
    \begin{pmatrix}
        \gamma I + K_n & k(X_n, \vx_{n+1}) \\
        k(X_n, \vx_{n+1})^T & \gamma + k_{n+1} \\
    \end{pmatrix}
    \nonumber \\
    & = \begin{pmatrix}
        Y^e_n - \eta \left[ f^{(n)}_{on} (\vx_{n+1}) - \vy_{n+1} \right] k(\vx_{n+1}, X_n), & 
        \vy_{n+1} - (\eta [\gamma + k_{n+1}] - 1) \left[f^{(n)}_{on} (\vx_{n+1}) - \vy_{n+1}\right]
    \end{pmatrix},
\end{align}
where
$f^{(n)}_{on} (\vx_{n+1}) \equiv f_{on}(\vx_{n+1}; X_n, Y_n) = Y_n (\tfrac{1}{\eta} I + K^U_n)^{-1} k(X_n, \vx_{n+1})$ from Eq. \ref{eq_okr_closed_zero_reg}. By denoting the prediction error of the online predictor by 
\begin{align}
    \bm{e}_{n+1} \equiv f^{(n)}_{on} (\vx_{n+1}) - \vy_{n+1},
\end{align}
we obtain Eq. \ref{eq_effective_target_change} in the main text.

\subsection{Dynamics of target correction in online kernel regression} \label{subsec_target_correction_dynamics_kr} 
With a parallel calculation with Appendix \ref{subsec_target_shift_dynamics_kr}, we can derive the dynamics of target correction $Y^c_n$. 
Corrected target in the presence of $(n+1)$ data points $X_{n+1}, Y_{n+1}$ is 
\begin{align}
    Y^c_{n+1}
    &= Y_{n+1} \left( \gamma I + K_{n+1} \right)^{-1} \left( \tfrac{1}{\eta} I + K^U_{n+1} \right)
    \nonumber \\
    &= \begin{pmatrix} Y_n & \vy_{n+1} \end{pmatrix}
    \begin{pmatrix}
        \gamma I + K_n & k(X_n, \vx_{n+1}) \\
        k(\vx_{n+1}, X_n) & \gamma + k_{n+1} \\
    \end{pmatrix}^{-1}
    \begin{pmatrix}
        \tfrac{1}{\eta} I + K^U_n & k(X_n, \vx_{n+1}) \\
        0 & 1/\eta\\
    \end{pmatrix},
\end{align}
where $K_n = K(X_n, X_n)$ and $k_{n+1} = k(\vx_{n+1}, \vx_{n+1})$ as before. 
Let us introduce scalar variables $q$ and $\rho$ by
\begin{align}
    q \equiv k(\vx_{n+1}, X_n) \left( \gamma I + K_n \right)^{-1} k(X_n, \vx_{n+1}), \quad
    \rho \equiv \frac{1}{\gamma + k_{n+1} - q}.
\end{align}
Then, the inverse matrix is rewritten as
\begin{align}
    &\begin{pmatrix}
        \gamma I + K_n & k(X_n, \vx_{n+1}) \\
        k(\vx_{n+1}, X_n) & \gamma + k_{n+1} \\
    \end{pmatrix}^{-1}
    \nonumber \\
    &= \begin{pmatrix}
        ( \gamma I + K_n )^{-1} \left( I + \rho k(X_n, \vx_{n+1}) k(\vx_{n+1}, X_n) ( \gamma I + K_n )^{-1} \right)
        & -\rho ( \gamma I + K_n )^{-1} k(X_n, \vx_{n+1}) \\
        -\rho  k(\vx_{n+1}, X_n) ( \gamma I + K_n )^{-1} 
        & \rho\\
    \end{pmatrix}.
\end{align}
Therefore, the corrected target follows
\begin{align}
    Y^c_{n+1}
    &= \begin{pmatrix} 
    Y_n (\gamma I + K_n)^{-1} + \rho \bm{e}^{\off}_{n+1} k(\vx_{n+1}, X_n) (\gamma I + K_n)^{-1}, &
    -\rho \bm{e}^{\off}_{n+1}
    \end{pmatrix}
    \begin{pmatrix}
        \tfrac{1}{\eta} I + K^U_n & k(X_n, \vx_{n+1}) \\
        0 & 1/\eta\\
    \end{pmatrix}
    \nonumber \\
    &= \begin{pmatrix}
        Y^c_n + \rho \bm{e}^{\off}_{n+1} k(\vx_{n+1}, X_n) (\gamma I + K_n)^{-1} (\tfrac{1}{\eta} I + K^U_n), &
        \vy_{n+1} + \left[ 1 + \rho(q - 1/\eta) \right] \bm{e}^{\off}_{n+1}
    \end{pmatrix},
\end{align}
where $\bm{e}^{\off}_{n+1}$ is the prediction error of the offline estimator defined by 
\begin{align}
\bm{e}^{\off}_{n+1} \equiv f_{\off} (\vx_{n+1}; X_n, Y_n) - \vy_{n+1}.
\end{align}
Denoting 
\begin{align} \label{eq_def_c1_c2}
    c_q \equiv \frac{\gamma + k_{n+1} - 1/\eta}{\gamma + k_{n+1} - q}, \quad
    C_K \equiv (\gamma I + K_n)^{-1} (\tfrac{1}{\eta} I + K^U_n),
\end{align}
we have
\begin{align}
    \vy^c_i (n+1) = \begin{cases}
        \vy^c_i (n) + \rho \bm{e}^{\off}_{n+1} \left[ k(\vx_{n+1}, X_n) C_K \right]_i & i \leq n \\
        \vy_{n+1} + c_q \bm{e}^{\off}_{n+1}, & i=n+1
    \end{cases}
\end{align}
Thus, the change in target correction depends on the prediction error of the offline predictor on the new sample $(\vx_{n+1}, \vy_{n+1})$, $\bm{e}^{\off}_{n+1}$. If the offline regressor $f^{(n)}_{\off}$ predicts the next data perfectly, there is no additional target correction, whereas in the presence of non-zero error, previous samples experience correction that depends on its similarity with the new input $k(\vx_{n+1}, \vx_i)$.

\subsection{ Discrepancy minimization formulation of target correction} \label{subsec_lossmin_tc}
We consider an optimization problem where we aim to optimize the target for online learning $Z$ such that the learned function from online learning becomes close to the function learned in the offline manner with the true target $Y$.
Because online and offline estimators are written as
\begin{subequations}
\begin{align}
    f_{on} (\vx; X, Z) &= Z \left( \tfrac{1}{\eta} I + K^U(X,X) \right)^{-1} k(X, \vx),
    \\
    f_{\off} (\vx; X, Y) &= Y \left( \gamma I + K(X,X) \right)^{-1} k(X, \vx),
\end{align}
\end{subequations}
minimization of the distance between the two functions is written as:
\begin{align}
    \ell(Z) &= \frac{1}{2} \llv Z (\tfrac{1}{\eta}I + K^U(X,X))^{-1} k(X, \cdot) - Y (\gamma I + K(X,X))^{-1} k(X, \cdot) \rrv_{\mathcal{H}}^2, 
\end{align}
where $\llv \cdot \rrv_{\mathcal{H}}$ denotes the RKHS norm. 
Assuming that $K(X,X)$ is invertible, the global optimization of $Z$ with respect to this loss yield
\begin{align}
    Z = Y \left( \gamma I + K (X,X) \right)^{-1} \left( \tfrac{1}{\eta} I + K^U (X, X) \right),
\end{align}
which recovers the target correction solution (Eq. \ref{eq_def_target_correction_kr}).

\subsection{Proof of Proposition \ref{prop_iter_correction}} \label{subsec_iter_c_proof}
Optimization of the loss (Eq. \ref{eq_iter_correction_loss}) with respect to $Z_{new}$ yields
\begin{align}
    Z_{new} = \left(G - Z_{past} B \right) C^{-1}, 
\end{align}
where matrices $B \in \Real^{(n-b) \times b}, C \in \Real^{b\times b}$, and $G \in \Real^{d_y \times b}$ are defined as
\begin{subequations}
\begin{align}
    \begin{pmatrix} * & B \\ B^T & C \end{pmatrix}
    &\equiv \left( \tfrac{1}{\eta} I + K^U_{tot} \right)^{-1} (\gamma_o I + K_{tot}) \left( \tfrac{1}{\eta} I + K^L_{tot} \right)^{-1} \\
    \begin{pmatrix} * & G \end{pmatrix}
    &\equiv Y (\gamma I + K_{tot})^{-1} K \left( \tfrac{1}{\eta} I + K^L_{tot} \right)^{-1},
\end{align}
\end{subequations}
where we introduced $K_{tot} = K(X_{tot}, X_{tot})$, $K^U_{tot} = K^U(X_{tot},X_{tot})$, and $K^L_{tot} = K^L(X_{tot},X_{tot})$ for brevity. $K^L = (K^U)^T$ is the strictly lower-triangular matrix generated from $K$, and $*$ terms in the matrices are terms we don't need to derive. 
Using the properties of block matrix inverse, from a straightforward linear algebra (see Appendix \ref{subsec_BCG_derivation}), $B, C$, and $G$ are written as
\begin{subequations}
\begin{align}
    B &= \left( \tfrac{1}{\eta} I + K^U_{pp} \right)^{-1} K_{pn} \left( I - (\tfrac{1}{\eta} I + K^U_{nn})^{-1} (\gamma_o I + K_{nn}) \right) \left( \tfrac{1}{\eta} I + K^L_{nn} \right)^{-1}, \\
    C &= \left( \tfrac{1}{\eta} I + K^U_{nn} \right)^{-1} (\gamma_o I + K_{nn}) \left( \tfrac{1}{\eta} I + K^L_{nn} \right)^{-1}, \\
    G &= \left( Y_{new} + \gamma \left[ Y_{past} (\gamma I + K_{pp}) ^{-1} K_{pn} - Y_{new} \right] Q^{-1} \right) (\tfrac{1}{\eta} I + K^L_{nn})^{-1}, 
\end{align}
\end{subequations}
where $K_{pp} = K(X_{past}, X_{past})$, $K_{pn} = K(X_{past}, X_{new})$, and $K_{nn} = K(X_{new}, X_{new})$. Matrix $Q$ in the third line is defined as
\begin{align}
    Q \equiv \left( \gamma I + K_{nn} \right) - K_{np} (\gamma I + K_{pp})^{-1} K_{pn}.
\end{align}
Let us further denote $X_{past}$ as $X_p$ and $X_{new}$ as $X_n$ for brevity, and use the same notation for $Y$ and $Z$
Then, substituting matrices $B$, $C$, and $G$ in $Z_{new} = (G - Z_p B) C^{-1}$ with the expressions above, we have
\begin{align}
    Z_{new} &= (G - Z_p B) C^{-1}
    \nonumber \\
    &= \left( Y_n + \gamma [Y_p (\gamma I + K_{pp})^{-1} K_{pn} - Y_n] Q^{-1} \right) (\gamma_o I + K_{nn})^{-1} (\tfrac{1}{\eta} I + K^U_{nn}) 
    \nonumber \\
    &\quad - Z_p (\tfrac{1}{\eta} I + K^U_{pp})^{-1} K_{pn} \left( I - (\tfrac{1}{\eta} I + K^U_{nn})^{-1} (\gamma_o I + K_{nn}) \right) (\gamma_o I + K_{nn})^{-1} (\tfrac{1}{\eta} I + K^U_{nn})
    \nonumber \\
    &= \left( Y_ n + \gamma \left[ f_{\off} (X_n; X_p, Y_p) - Y_n \right] Q^{-1} \right) (\gamma_o I + K_{nn})^{-1} (\tfrac{1}{\eta} I + K^U_{nn}) 
    \nonumber \\
    &\quad - f_{on} (X_n; X_p, Z_p) \left[(\gamma_o I + K_{nn})^{-1} (\tfrac{1}{\eta} I + K^U_{nn}) - I \right]
    \nonumber \\
    &= Y_n + \gamma \left[ f_{\off} (X_n; X_p, Y_p) - Y_n \right] Q^{-1} (\gamma_o I + K_{nn})^{-1} (\tfrac{1}{\eta} I + K^U_{nn}) 
    \nonumber \\
    &\quad + \left( Y_n - f_{on} (X_n; X_p, Z_p) \right) \left[(\gamma_o I + K_{nn})^{-1} (\tfrac{1}{\eta} I + K^U_{nn}) - I \right]. 
\end{align}
In the third line, we used Eqs. \ref{eq_batch_kernel_regression} and \ref{eq_okr_closed_zero_reg} to get
\begin{align}
    Y_p (\gamma I + K_{pp})^{-1} K_{pn} = f_{\off} (X_n; X_p, Y_p), \quad
    Z_p (\tfrac{1}{\eta} I + K^U_{pp})^{-1} K_{pn} = f_{on} (X_n; X_p, Z_p). 
\end{align}
Let us define $C_{\off}$ and $C_{on}$ by 
\begin{subequations}
\begin{align}
  C_{on} &\equiv \left((\gamma_o I + K_{nn})^{-1} (\tfrac{1}{\eta} I + K^U_{nn}) - I \right),
  \\
  C_{\off} &\equiv \gamma \left( [\gamma I + K_{nn}] - K_{np} [\gamma I + K_{pp}]^{-1} K_{pn} \right)^{-1} (\gamma_o I + K_{nn})^{-1} (\tfrac{1}{\eta} I + K^U_{nn}).
\end{align}
\end{subequations}
Then, we obtain the simple expression from the main text:
\begin{align}
    Z_{new} = Y_n + (Y_n -f_{on}(X_n; X_p,Z_p))C_{on} + (f_{\off} (X_n; X_p, Y_p) - Y_n) C_{\off}.
\end{align}


\subsection{Derivation of matrices $B$, $C$, $G$} \label{subsec_BCG_derivation}
Here, we detail the derivation of matrices $B$, $C$, and $G$ for completeness. 
From the block matrix inverse formula, the inverse of the upper triangular matrix $\tfrac{1}{\eta} I + K^U$ follows
\begin{align}
    \left( \tfrac{1}{\eta} I + K^U \right)^{-1}
    &= \begin{pmatrix}
    \tfrac{1}{\eta} I + K^U_{pp} & K_{pn} \\
    0 & \tfrac{1}{\eta} I + K^U_{nn}
    \end{pmatrix}^{-1}
    \\ \nonumber \\
    &= \begin{pmatrix}
     ( \tfrac{1}{\eta} I + K^U_{pp} )^{-1} &
     - ( \tfrac{1}{\eta} I + K^U_{pp} )^{-1} K_{pn} ( \tfrac{1}{\eta} I + K^U_{nn} )^{-1} \\
     0 & 
     ( \tfrac{1}{\eta} I + K^U_{nn} )^{-1}
    \end{pmatrix},
\end{align}
where $K_{pp} \in \Real^{(n-b) \times (n-b)}$ is the kernel matrix of previous $n-b$ samples, $K_{nn} \in \Real^{b \times b}$ is the kernel matrix of the new $b$ samples, and $K_{pn} \in \Real^{(n-b) \times b}$ is the cross term. 
Similarly, the inverse of the lower triangular matrix $\tfrac{1}{\eta} I + K^L$ is decomposed as
\begin{align}
  \left( \tfrac{1}{\eta} I + K^L \right)^{-1}
  &= \begin{pmatrix}
  \tfrac{1}{\eta} I + K^L_{pp} & 0 \\
  K_{np} & \tfrac{1}{\eta} I + K^L_{nn}
  \end{pmatrix}^{-1}
  \nonumber \\
  &= \begin{pmatrix}
  ( \tfrac{1}{\eta} I + K^L_{pp} )^{-1} &
  0 \\
  - ( \tfrac{1}{\eta} I + K^L_{nn} )^{-1} K_{np} (  \tfrac{1}{\eta} I + K^L_{pp} )^{-1} & 
  ( \tfrac{1}{\eta} I + K^L_{nn} )^{-1}
  \end{pmatrix}.
\end{align}
Therefore, matrices $B$ and $C$ follow
\begin{align}
 \begin{pmatrix} * & B \\ B^T & C
 \end{pmatrix}
 &= \left( \tfrac{1}{\eta} I + K^U \right)^{-1} \left( \gamma_o + K \right) \left( \tfrac{1}{\eta} I + K^L \right)^{-1}
 \nonumber \\
 &= \left( \tfrac{1}{\eta} I + K^U \right)^{-1} 
 \begin{pmatrix} 
 \,\, * & K_{pn} \left( \tfrac{1}{\eta} I + K^L_{nn} \right)^{-1} \\
 \,\, * & \left( \gamma_o I + K_{nn} \right) \left( \tfrac{1}{\eta} I + K^L_{nn} \right)^{-1}
 \end{pmatrix}
 \nonumber \\
 &= \begin{pmatrix} 
 \,\, * & 
 \left( \tfrac{1}{\eta} I + K^U_{pp} \right)^{-1} K_{pn} \left( I - ( \tfrac{1}{\eta} I + K^U_{nn} )^{-1} \left( \gamma_o I + K_{nn} \right) \right) ( \tfrac{1}{\eta} I + K^L_{nn} )^{-1} 
 \\
 \,\, * & 
 ( \tfrac{1}{\eta} I + K^U_{nn} )^{-1}  
\left( \gamma_o I + K_{nn} \right)
 ( \tfrac{1}{\eta} I + K^L_{nn} )^{-1}  
 \end{pmatrix},
\end{align}
where $*$ represents terms we omitted, as we do not need to estimate for the iterative correction estimation. 
Similarly, on $G$ term, denoting
\begin{align}
    Q \equiv (\gamma I + K_{nn}) - K_{np} (\gamma I + K_{pp})^{-1} K_{pn}, 
\end{align}
$(\gamma I + K)^{-1}$ is rewritten as
\begin{align}
  (\gamma I + K)^{-1}
  &= \begin{pmatrix}
      \gamma I + K_{pp} & K_{pn} \\
      K_{np} & \gamma I + K_{nn} 
  \end{pmatrix}^{-1}
  \nonumber \\
  &= \begin{pmatrix}
  (\gamma I + K_{pp})^{-1} \left( I + K_{pn} Q^{-1} K_{np} (\gamma I + K_{pp})^{-1} \right) &
  - (\gamma I + K_{pp})^{-1} K_{pn} Q^{-1} \\
  - Q^{-1} K_{np} \left( \gamma I + K_{pp} \right)^{-1} &
  Q^{-1}
  \end{pmatrix},
\end{align}
and $K (\tfrac{1}{\eta} I + K^L )^{-1}$ follows
\begin{align}
    K \left( \tfrac{1}{\eta} I + K^L \right)^{-1}
    &= \begin{pmatrix}
    K_{pp} & K_{pn} \\ K_{np} & K_{nn} 
    \end{pmatrix}
    \begin{pmatrix}
     (\tfrac{1}{\eta} I + K^L_{pp})^{-1} & 0 \\
     - (\tfrac{1}{\eta} I + K^L_{nn})^{-1} K_{np} (\tfrac{1}{\eta} I + K^L_{pp})^{-1} &
     (\tfrac{1}{\eta} I + K^L_{nn})^{-1} 
    \end{pmatrix}
    \nonumber \\
    &= \begin{pmatrix}
      \quad * & K_{pn} ( \tfrac{1}{\eta} I + K^L_{nn} )^{-1} \\
      \quad * & K_{nn} ( \tfrac{1}{\eta} I + K^L_{nn} )^{-1} 
    \end{pmatrix}.
\end{align}
Thus, $G$ in $(*, G) = Y (\gamma I + K)^{-1} K (\tfrac{1}{\eta}I + K^L)$ is rewritten as
\begin{align}
    G
    =& 
    \left[ Y_p (\gamma I + K_{pp})^{-1} \left( I + K_{pn} Q^{-1} K_{np} (\gamma I + K_{pp})^{-1} \right) - Y_n Q^{-1} K_{np} (\gamma I + K_{pp})^{-1} \right] K_{pn} (\tfrac{1}{\eta} I + K^L_{nn})^{-1} 
    \nonumber \\
    & + \left( - Y_p (\gamma I + K_{pp})^{-1}K_{pn} + Y_n \right) Q^{-1} K_{nn} (\tfrac{1}{\eta} I + K^L_{nn})^{-1} .
\end{align}
While the expression of matrix $G$ appears complicated, we can simplify the expression by first rearranging the equations by separating $Y_p$ and $Y_n$ dependent terms:
\begin{align}
    G &= 
    Y_p (\gamma I + K_{pp})^{-1} \left( K_{pn} - K_{pn}Q^{-1} \left[ K_{nn} - K_{np} (\gamma I + K_{pp})^{-1} K_{pn} \right] \right) (\tfrac{1}{\eta} I + K^L_{nn})^{-1}
    \nonumber \\
    &\quad\;+ Y_n Q^{-1} \left[ K_{nn} - K_{np} (\gamma I + K_{pp})^{-1} K_{pn} \right] (\tfrac{1}{\eta} I + K^L_{nn})^{-1}
    \nonumber \\
    & = Y_p (\gamma I + K_{pp})^{-1} K_{pn} \gamma Q^{-1} (\tfrac{1}{\eta} I + K^L_{nn})^{-1}
    + Y_n (I - \gamma Q^{-1})
     (\tfrac{1}{\eta} I + K^L_{nn})^{-1} 
    \nonumber \\
    &= \left( Y_n + \gamma \left[ Y_p (\gamma I + K_{pp} - Y_n) Q^{-1} \right] Q^{-1} \right) (\tfrac{1}{\eta} I + K^L_{nn})^{-1}.
\end{align}
In the third line, we used 
$K_{nn} - K_{np} (\gamma I + K_{pp})^{-1} K_{pn} = Q - \gamma I$.

\section{Implementation details}
Below we provide numerical experiment details. Please also see the corresponding repository \url{https://github.com/Hiratani-Lab/target_correction} for further details. 

\subsection{Toy model (Figs. \ref{fig:kernel_reg} and \ref{fig:toy_online})} \label{subsec_toy_model_det}
We generated the one-dimensional data, used for illustration of effective target shift and its correction, using a Gaussian Process (GP) with radial basis function (RBF) kernel. 
Specifically, we defined $X_{grid}$ and $X_{grid} = [0.0, 0.005,...,1.0]$ and sampled the true curve $f_*(x)$ by $f_*(X
_{grid}) \sim \gauss (0, k_{\text{rbf}} (X_{grid},X_{grid}))$, where $k_{\text{rbf}} (x,x') \equiv \exp(-(x-x')^2 / \sigma_{\text{rbf}}^2)$ with $\sigma_{\text{rbf}}^2 = 0.1$. 
We then sub-sampled mutually exclusive training and test sets $X_{\text{train}}$ and $X_{\text{test}}$ randomly from $X_{grid}$, with $|X_{\text{train}}| = 40$ and $|X_{\text{test}}| = 160$. 
The targets $Y_{\text{train}}$ and $Y_{\text{test}}$ were generated by adding Gaussian noise to $f_* (x)$. 
For $x_i \in X_{\text{train}}$, the corresponding target is $y_i = f_* (x) + \sigma_{tn} \xi$, where $\sigma_{tn} = 0.3$ and $\xi$ is a random variable sampled independently from a Gaussian distribution $\gauss(0,1)$. 

In Fig. \ref{fig:kernel_reg}, we used the same RBF kernel with the generative model for offline and online learning model, meaning that the kernel is defined by $k(x,x') = \exp(-(x-x')^2 / \sigma_{\text{rbf}}^2)$. 
We set the regularizer amplitude of the offline learning $\gamma$ to $\gamma = 1.0$, and the learning rate of online learning $\eta$ to $\eta = 0.5$. We picked the latter, $\eta$, to make the diagonal terms of $\gamma I + K(X,X)$ and $\tfrac{1}{\eta} I + K^U (X,X)$ the same (i.e., $\gamma + 1 = 1/\eta$). 

In Fig. \ref{fig:toy_online}, the data was generated in the same way as Fig. \ref{fig:kernel_reg}, but we instead used a finite-size kernel defined by $k(x,x') = \tanh(Jx)^T \tanh(Jx')$ to illustrate the exact match between iterative online and mini-batch updates, and their offline formulations (Eqs. \ref{eq_okr_closed_zero_reg} and \ref{eq_minibatch_predictor}, respectively). We set $J$ to be a 100-dimensional vector, where each element was sampled independently from a Gaussian distribution $\gauss(0,1)$. 

\begin{figure}[t]
  \begin{center}
    \centerline{\includegraphics[width=1.0\columnwidth]{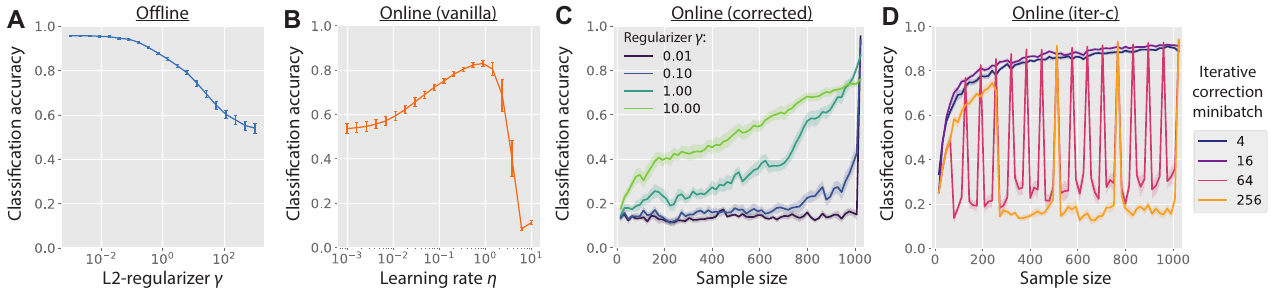}}
    \caption{
    Offline and online regression using NTK applied to MNIST. 
    \textbf{A)} Offline learning performance as a function of the L2-regularizer $\gamma$. 
    \textbf{B)} Vanilla online learning performance as a function of the fixed learning rate $\eta$. 
    \textbf{C)} Online learning with corrected targets under various regularizers $\gamma$ of the target offline predictor. Note that the online learning process itself was implemented without a regularizer as noted in Theorem \ref{target_correction_theorem}. 
    \textbf{D)} Learning curves of iterative target correction with various minibatch sizes for the iterative correction (Eq. \ref{eq_citer_Znew}) under $\gamma = 0.01$ and $\eta = 1.0$. In all the curves, weight updates were implemented in a fully online, sample-by-sample manner. In panels C and D, the test accuracy was measured after every 16 weight updates. Error bars and shaded areas represent SEMs over 10 random seeds. 
    }
    \label{fig:conv_ntk_supp}
  \end{center}
\end{figure}

\subsection{ Neural Tangent Kernel (NTK) Regression Models (Figs. \ref{fig:conv_ntk} and \ref{fig:conv_ntk_supp})} \label{subsec_ntk_reg_det} 
For NTK regression, we used a NTK of an infinite-width convolutional neural network with two convolutional layers and one dense layer. We set the size of receptive field to (3,3) and used average pooling with (2,2) window and (2,2) strides for both convolutional layer, and set the activation function to ReLU. 
We estimated NTK analytically using Python \verb|neural_tangents| library \citep{neuraltangents2020}. 
The MNIST images were whitened at each pixel over samples, and presented to the model. We sub-sampled 1024 training data and 256 test data to focus on sparse sample regime. In the class-incremental task setting depicted in Fig. \ref{fig:conv_ntk}D, we ordered 1024 training data from label 0 to label 9. 

In Figs. \ref{fig:conv_ntk}A-C, we set $\gamma = 0.01$ and $\eta = 1.0$ based on the parameter search in Figs. \ref{fig:conv_ntk_supp}A and B. We set $\gamma_o$ from Eq. \ref{eq_iter_correction_loss} to $\gamma_o = 0$ throughout the numerical experiment because $K_{nn}$ was full rank. 
Similarly, from parameter search, in Fig. \ref{fig:conv_ntk}D, we used $\gamma = 0.001$ for the offline learning (blue line) and $\eta = 0.001$ for the vanilla online learning (orange), and used $(\gamma, \eta) = (0.001, 0.001)$ for the online learning with corrected targets (green). We instead used $(\gamma, \eta) = (1.0, 0.3)$ for the iterative target correction. 
In the iterative target correction lines in Fig. \ref{fig:conv_ntk}C and D, we set the mini-batch size for the iterative target correction to $b = 16$, but weight updates were implemented in a fully online, sample-by-sample manner.
A large regularizer stabilizes the learning curve of the online learning with corrected targets (Eq. \ref{eq_def_target_correction_kr}), but lowers the final accuracy. 
Similarly, as depicted in \ref{fig:conv_ntk_supp}D, when the minibatch for iterative correction is large, the final accuracy improves, but the learning curve becomes less stable.

\subsection{ Application to Non-linear Models (Figs. \ref{fig:conv_sgd} and \ref{fig:conv_sgd_supp}) } \label{subsec_nonlinear_cl_det}

\begin{figure}[t]
  \begin{center}
  \centerline{\includegraphics[width=1.0\columnwidth]{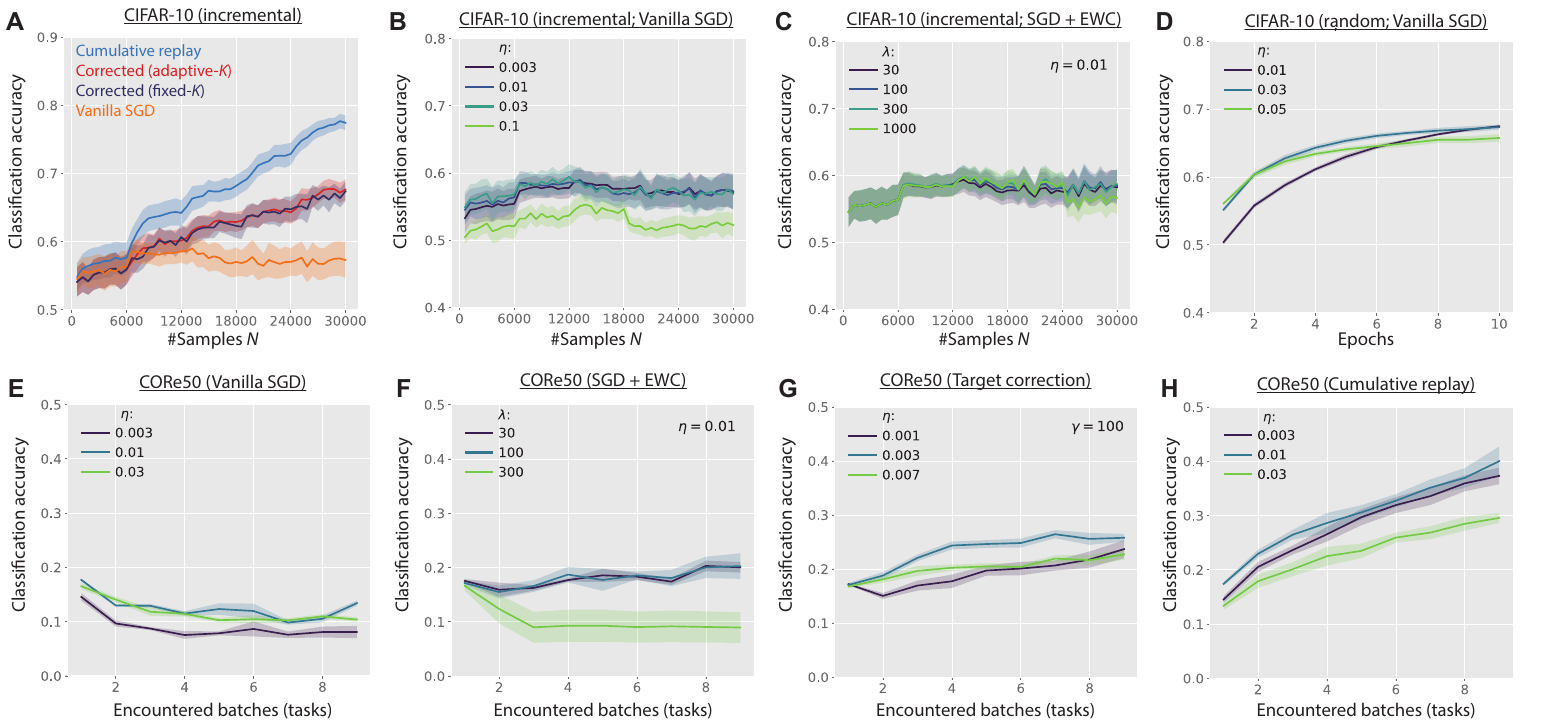}}
    \caption{
    Hyperparameter dependence of nonlinear model training.
    \textbf{(A)} Mini-batch SGD using target correction estimated with a fixed initial NTK (purple line) achieves performance comparable to target correction using an adaptive NTK updated at the beginning of each task (i.e., every $6000$ samples; red line) in the domain-incremental CIFAR-10 task.
    \textbf{(B,C)} Learning-rate $\eta$ dependence of vanilla SGD performance \textbf{(B)} and regularization-strength $\lambda$ dependence of EWC performance \textbf{(C)} in the domain-incremental CIFAR-10 task.
    \textbf{(D)} Learning-rate dependence of vanilla SGD performance in the random CIFAR-10 task depicted in Fig. \ref{fig:conv_sgd}C.
    \textbf{(E--H)} Dependence of vanilla SGD \textbf{(E)}, EWC \textbf{(F)}, target correction \textbf{(G)}, and cumulative replay \textbf{(H)} on the learning rate $\eta$ and regularization strength $\lambda$ in the CORe50 task. Error bars indicate SEM over 10 random seeds \textbf{(A--C)} or 5 random seeds \textbf{(D--H)}.
    }
    \label{fig:conv_sgd_supp}
  \end{center}
\end{figure}

\subsubsection*{Network Architecture}
We used a small convolutional neural network (CNN) for the image classification experiments on CIFAR-10. The network consisted of two convolutional layers followed by a single fully connected output layer. The first convolutional layer used 32 filters with 3×3 kernels and padding of 1, followed by a ReLU activation and average pooling with a 2×2 window and stride 2. The second convolutional layer used 64 filters with 3×3 kernels and padding of 1, followed by another ReLU activation. Adaptive average pooling was then used to obtain a 7×7 spatial feature map, which was flattened into a 64×7×7-dimensional feature vector. For classification, this feature vector was mapped directly to the output classes using a single linear layer. All convolutional and linear layers were used without bias terms to suppress rank-1 correlations in empirical NTK. Unless otherwise specified, convolutional and linear layers were initialized using the default PyTorch initialization.

For the CORe50 dataset, we used a pre-trained ResNet-18 model as the feature extractor. The parameters of the pre-trained feature extraction layers were kept frozen throughout training. On top of the frozen ResNet-18 features, we added a two-layer fully connected classifier head, consisting of a 100-dimensional hidden layer with a ReLU nonlinearity and a 50-dimensional output layer corresponding to the 50 target classes in CORe50. Only the parameters of these two fully connected layers were trained and updated. Both linear layers were used without bias terms.

\subsubsection*{Datasets}
For Fig.~\ref{fig:conv_sgd}A, the 10 classes in CIFAR-10 were randomly partitioned into five ordered binary classification tasks and presented to the model sequentially. 
We randomly sub-sampled half of the training samples to keep the size of the kernel matrix moderate; as a result, each task consisted of 6,000 samples presented in a single pass with mini-batches. For testing, we used the full \(10^4\) test samples from the CIFAR-10 dataset. 

For Fig.~\ref{fig:conv_sgd}B, we followed the CORe50 task setting used in \citep{lomonaco2017core50}. This is a 50-class class-incremental object recognition setting in which new object classes are introduced sequentially across batches. 
The first batch, or task, contains 10 classes, while each of the remaining eight batches contains 5 classes, resulting in 50 classes in total. Following \citep{lomonaco2017core50}, the classes were selected using a biased policy that promotes broad categorical representation by spreading objects from the same category across different batches. 
The only difference from the original setting is that, instead of using video frames sampled at 20 frames per second, we used 1 frame per second to reduce computational cost.

For Fig.~\ref{fig:conv_sgd}C, we evaluated the standard 10-class image classification on CIFAR-10 with a random sample order. 
Here, instead of strictly online single-pass learning, we considered a multi-pass learning process, since a single pass was not sufficient to achieve decent generalization performance. 
The order of training samples was randomly shuffled once at the beginning of training and then repeated across epochs.

\subsubsection*{Learning Algorithms}
We trained the networks by minimizing the mean-squared error (MSE) loss with stochastic gradient descent (SGD) without momentum using various regularizers and corrected targets as specified below. In all update rules below, we set the mini-batch size of the weight update to 4. 

\paragraph{Vanilla SGD}
The vanilla SGD baseline was trained sequentially on samples in the prescribed order, without target correction, replay, or additional regularization. We used $\eta = 0.01$ in Figs.~\ref{fig:conv_sgd}A and B, and $\eta = 0.03$ in Fig.~\ref{fig:conv_sgd}C. See Figs.~\ref{fig:conv_sgd_supp}B, D, and E for the learning rate dependence.  

\paragraph{SGD with iterative target correction}
For the iterative target correction method, corrected targets were computed using Eq.~\ref{eq_citer_Znew}, but with a block-wise upper-triangular matrix $K^{bU}$ instead of the full upper-triangular matrix $K^U$, to reflect the mini-batch SGD update. 
We set the size of iterative target correction to $b=20$, and set $\gamma_o=0$. We coarsely optimized $\eta$ and $\gamma$ numerically for each task setting. 
The block size of $K^{bU}$ was set to 4, matching the mini-batch size used in SGD. The corrected targets were then used in place of the original one-hot labels when computing the MSE loss and the corresponding gradient. Note that the parameter $\eta$ in Eq.~\ref{eq_def_Con_Coff}, which is used in Eq.~\ref{eq_citer_Znew}, denotes the learning rate used for the SGD weight update.

The empirical NTK was computed with respect to the trainable parameters of the current network. For CORe50, where the pre-trained ResNet-18 feature extractor was frozen, the empirical NTK was computed only with respect to the trainable fully connected classifier head. The specific schedule for updating the empirical NTK depended on the experimental setting.

For Fig.~\ref{fig:conv_sgd}A, corresponding to the CIFAR-10 domain-incremental binary classification setting, the empirical NTK was updated at the beginning of each binary task, i.e., every 6,000 training samples. 
We used \(\eta = 0.007\) and \(\gamma = 100.0\). 
In the fixed-$K$ variant, shown in Fig.~\ref{fig:conv_sgd_supp}A, the empirical NTK was computed only once using the initial network parameters, and the same corrected targets were reused across all epochs. 

For Fig.~\ref{fig:conv_sgd}B, corresponding to the CORe50 class-incremental setting, each encountered batch was trained for 10 epochs. After each epoch within an encountered batch, we recomputed the empirical NTK using the current model parameters and recalculated the corrected targets. For later encountered batches, target correction was computed based on data from previously encountered batches together with the current batch, following the sequential structure of the CORe50 setting. The hyperparameters were set to \(\eta = 0.003\) and \(\gamma = 100.0\); see Fig.~\ref{fig:conv_sgd_supp}G. 

For Fig.~\ref{fig:conv_sgd}C, corresponding to the standard CIFAR-10 10-class classification with a fixed shuffled order, we updated the kernel at the beginning of each epoch, as we used multi-epoch training. We used a learning rate of \(\eta = 0.01\) and regularizer \(\gamma = 30.0\).

\paragraph{Elastic Weight Consolidation (EWC) \citep{kirkpatrick2017overcoming}}
For the SGD+EWC baseline, we added a quadratic regularization term to the MSE loss to penalize changes in parameters that were important for previously learned tasks or encountered batches. After each task or encountered batch \(\tau\), we stored the trainable parameters \(\vw^{\ast,(\tau)}\) and estimated the parameter importance \(F_i^{(\tau)}\) using a diagonal approximation, where each diagonal entry was computed from the squared gradient of the loss with respect to parameter \(\vw_i\). During training on the current task or encountered batch \(t\), the objective was
\begin{align}
\mathcal{L}_{\mathrm{EWC}}^{(t)}(\vw)
= \mathcal{L}_{\mathrm{MSE}}^{(t)}(\vw)
+ \frac{\lambda_{\mathrm{EWC}}}{2}
\sum_{\tau < t}
\sum_i
F_i^{(\tau)}
\left(\vw_i-\vw_i^{\ast,(\tau)}\right)^2 .
\end{align}
where \(\vw_i\) denotes the current parameter, \(\vw_i^\ast\) denotes the stored parameter value after previous training, and \(F_i\) is the estimated importance weight. We used $(\eta, \lambda_{\text{EWC}}) = (0.01, 100)$ for both Figs. \ref{fig:conv_sgd}A and B (see Figs. \ref{fig:conv_sgd_supp}C and F for parameter dependence).

\paragraph{Cumulative Replay}
For the cumulative replay baseline, the model was trained using all samples encountered so far. That is, after a new task or encountered batch was introduced, the training set was expanded to include both the new data and all previously observed data. This baseline therefore provides a reference for performance when previous data remain available for replay. In both continual CIFAR-10 and CORe50 tasks, we set the learning rate $\eta = 0.01$.

\subsubsection*{Implementation}
The small CNN used in the CIFAR-10 experiments was implemented using torch.nn, whereas the pre-trained ResNet-18 model used for CORe50 was constructed from torchvision.models. The experiments were conducted on NVIDIA H100 GPUs and on additional GPUs. Execution time of the nonlinear models was around a few hours per run. 



\end{document}